\documentclass{article}

\usepackage{microtype}
\usepackage{graphicx}
\usepackage{subcaption}
\usepackage{booktabs} 
\usepackage{multirow}
\usepackage[table]{xcolor}
\usepackage{pifont}
\usepackage{makecell}

\usepackage{booktabs, multirow, colortbl}
\usepackage{threeparttable} 
\usepackage{xcolor}   
\usepackage{colortbl}  
\usepackage{multirow} 
\usepackage{booktabs} 
\usepackage{adjustbox}  
\usepackage{booktabs} 
\usepackage{tabularx}
\usepackage{booktabs}
\usepackage{multirow}
\usepackage{xcolor}
\usepackage{float}
\usepackage{fontawesome}
\definecolor{softpink}{RGB}{252,240,242}
\definecolor{mf0670}{HTML}{F8E6A0}
\definecolor{lightblue}{RGB}{135, 185, 235}
\usepackage{float}

\usepackage{hyperref}
\definecolor{rowshade}{HTML}{F1F6EC}

\usepackage{colortbl} 
\definecolor{hici}{RGB}{239,247,242}

\usepackage[accepted]{icml2026}

\usepackage{amsmath}
\usepackage{amssymb}
\usepackage{mathtools}
\usepackage{amsthm}

\usepackage[capitalize,noabbrev]{cleveref}

\theoremstyle{plain}
\newtheorem{theorem}{Theorem}[section]

\theoremstyle{definition}

\theoremstyle{remark}
\newtheorem{remark}[theorem]{Remark}

\usepackage[textsize=tiny]{todonotes}

\icmltitlerunning{HiCI: Hierarchical Construction–Integration for Long-Context Attention}
\usepackage{enumitem}

\begin{document}

\twocolumn[
  \icmltitle{HiCI: Hierarchical Construction–Integration for Long-Context Attention}

  \icmlsetsymbol{equal}{*}

  \begin{icmlauthorlist} 
    \icmlauthor{Xiangyu Zeng}{syd} 
    \icmlauthor{Qi Xu}{whu} 
    \icmlauthor{Yunke Wang}{syd} 
    \icmlauthor{Chang Xu}{syd}  
  \end{icmlauthorlist}   
  
  \icmlaffiliation{syd}{School of Computer Science, University of Sydney, Sydney, Australia}      
  \icmlaffiliation{whu}{Westlake University, Hangzhou, China}  
  \icmlcorrespondingauthor{Chang Xu}{c.xu@sydney.edu.au}
   \begin{center}  
    \small
    \faGithub\enspace\href{https://github.com/zengxyyu/HiCI.git}{\texttt{github.com/zengxyyu/HiCI}}         
  \end{center}

  \icmlkeywords{Machine Learning, ICML, Long-context language modeling}

  \vskip 0.3in
]

\printAffiliationsAndNotice{}  

\begin{abstract}
Long-context language modeling is commonly framed as a scalability challenge of token-level attention, yet local-to-global information structuring remains largely implicit in existing approaches. Drawing on cognitive theories of discourse comprehension, we propose \textbf{HiCI} (Hierarchical Construction--Integration), a hierarchical attention module that constructs segment-level representations, integrates them into a shared global context, and broadcasts both to condition segment-level attention. We validate HiCI through parameter-efficient adaptation of LLaMA-2 (7B and 13B) and Qwen3-8B with only $\sim$4--5\% additional parameters, extending context to 100K/64K tokens for LLaMA-2-7B/13B and to 48K tokens for Qwen3-8B.
Across language modeling, retrieval, and instruction-following benchmarks, HiCI yields consistent improvements over strong baselines, including matching proprietary models on topic retrieval and surpassing GPT-3.5-Turbo-16K on code comprehension. 
These results demonstrate the effectiveness of explicit hierarchical structuring as an inductive bias for long-context modeling.  

\end{abstract}

\section{Introduction}
\label{sec:intro}
Large language models (LLMs) have achieved remarkable success across a wide range of natural language tasks, yet their ability to process long sequences remains fundamentally constrained by limited context windows \citep{vaswani2017attention,brown2020language, li2025identify, wang2025position, wang2026sibyl}.
Long-context modeling poses two fundamental challenges: (1) \textit{efficiency}—the quadratic complexity of self-attention leads to prohibitive computational and memory costs as sequence length increases; and (2) \textit{effectiveness}—the ability to accept longer inputs does not necessarily yield reliable modeling of long-range dependencies \citep{DBLP:journals/corr/abs-2404-06654, liu2024lost}
. Reconciling these two requirements has emerged as a central challenge in long-context language modeling.

Recent work has progressed along two complementary lines. The first pursues \textbf{positional length generalization}:
techniques such as PI, YaRN, and PoSE~\citep{chen2023extending, peng2024yarn, zhu2024pose} extend the usable context window by interpolating, rescaling, or simulating position indices, yet leave the attention operator—and its $\mathcal{O}(n^2)$ complexity—unchanged.
The second focuses on \textbf{attention efficiency and architectural scalability}, comprising two broad families. Sparse and grouped attention~\citep{beltagy2020longformer, zaheer2020bigbird, chen2024longlora} reduces cost by restricting token connectivity, with global interactions approximated via global tokens
or layer-wise multi-hop mixing from shifted grouping. Recurrent and memory-augmented architectures~\citep{dai2019transformerxl, bulatov2022recurrent, munkhdalai2024leave, he2025hmt}
model cross-segment dependencies through compressed state propagation, but sequential processing limits parallelism and long-range information may be attenuated through the compression bottleneck.
While effective for length generalization or efficiency, these approaches offer limited inductive bias for explicitly organizing long-context information into a local-to-global hierarchy that guides attention.

Cognitive theories of discourse comprehension offer a principled lens on this limitation.
The Construction-Integration model~\citep{kintsch1988role,kintsch1998comprehension} characterizes text understanding as a hierarchical process in which local representations are first constructed from input segments and subsequently integrated—via constraint satisfaction—into a coherent global representation. 
Complementarily, Global Workspace Theory~\citep{baars1993cognitive,dehaene2001towards}
posits that specialized processors operate in parallel,
with information gaining access to a shared workspace being \emph{broadcast} globally,
achieving wide availability and exerting top-down influence on subsequent processing.
This broadcast mechanism finds support in hierarchical cortical processing~\citep{felleman1991distributed},
where top-down signals modulate lower-level representations.
Taken together, these perspectives motivate a hierarchical inductive bias:
\emph{local construction} of segment-level representations,
\emph{global integration} into a shared context,
and \emph{top-down broadcast} to condition subsequent attention.

Guided by this principle, we propose \textbf{HiCI}
(\textbf{Hi}erarchical \textbf{C}onstruction--\textbf{I}ntegration),
a hierarchical attention module that instantiates
\emph{construction, integration, and broadcast} within Transformer attention.
HiCI structures attention computation through three stages.
\textbf{Local construction} extracts segment-level representations
via cross-attention with a shared set of learnable query slots.
\textbf{Global integration} aggregates these local representations into a
compact shared context through multi-view statistical pooling and
attention-based weighting.
\textbf{Top-down broadcast} prepends both global and local representations 
to each segment's key--value sequence, conditioning token-level       
attention on hierarchical context while preserving segment-parallel computation.

We apply HiCI to pretrained LLaMA-2 models~\citep{touvron2023llama2} and Qwen3-8B~\citep{yang2025qwen3}, combining position interpolation~\citep{chen2023extending}
for context extension with FlashAttention-2~\citep{dao2024flashattention} for efficient long-sequence computation. Following LongLoRA’s parameter-efficient recipe~\citep{chen2024longlora}, we freeze the backbone and train only LoRA adapters, embedding and normalization layers, together with the proposed HiCI module. Despite adding only $\sim$4--5\% parameters during training, this enables context extension to 100K/64K tokens for LLaMA-2-7B/13B and to 48K tokens for Qwen3-8B. At inference, HiCI is optional: it can be applied during prefill to reduce time-to-first-token latency, or omitted in favour of standard full attention.

Extensive experiments on language modeling (PG-19~\citep{Rae2020Compressive},
Proof-pile~\citep{azerbayev2022proofpile}), retrieval (passkey and topic),
and instruction-following (LongBench~\citep{bai2024longbench}) benchmarks
demonstrate that HiCI consistently improves performance over strong baselines
across a wide range of tasks and context lengths. HiCI achieves 100\% passkey accuracy~\citep{mohtashami2023landmark} within the 32K training regime and
maintains substantially higher accuracy under direct extrapolation,
attains the best topic-retrieval~\citep{li2023how} accuracy among the evaluated open-source models,
and achieves higher accuracy than GPT-3.5-Turbo-16K~\citep{achiam2023gpt}
on the Code category of LongBench (+9.7\%). Ablation studies further identify global integration as the dominant contributor within HiCI, and show that explicit hierarchical conditioning is essential beyond grouped attention alone.

In summary, our contributions are:                                                                             
\begin{itemize}                                                                                                
\item We propose HiCI, a hierarchical attention module that instantiates                                       
construction--integration--broadcast as an explicit inductive bias for                                         
long-context modeling in Transformers.                                                                 
\item We show that HiCI generalises across model families and scales, requiring only $\sim$4--5\% additional parameters to extend LLaMA-2-7B/13B to 100K/64K tokens and Qwen3-8B to 48K, while consistently improving perplexity and downstream performance on retrieval and instruction-following tasks.

\item Ablation studies identify global integration as the dominant component, and reveal an optimal compact slot budget. Attention visualizations further show that deeper layers increasingly attend to global representations, indicating emergent hierarchical information routing.

  \end{itemize}

\section{Related Work}
\subsection{Efficient Attention Mechanisms}

  The quadratic complexity of self-attention has motivated extensive research on efficient alternatives.
  \textbf{Sparse attention} restricts the attention pattern to reduce computation:
  Longformer~\citep{beltagy2020longformer} employs sliding windows augmented with task-specific global tokens,
  BigBird~\citep{zaheer2020bigbird} combines local, global, and random attention to achieve linear complexity with theoretical guarantees,
  and LongNet~\citep{DBLP:journals/corr/abs-2307-02486} uses dilated attention with exponentially increasing receptive fields across heads.
  \textbf{Linear attention}~\citep{katharopoulos2020transformers} approximates softmax via kernel decomposition, enabling $O(n)$ complexity.
  However, kernel-based approximations exhibit degraded performance on retrieval-intensive tasks~\citep{arora2024zoology},
  and predefined sparsity patterns limit adaptability to diverse long-range dependencies.

\subsection{Context Window Extension for LLMs}
LLMs are typically pre-trained with fixed context lengths (e.g., 4{,}096 for Llama-2),
and context extension has been pursued through \emph{positional scaling} and \emph{efficient long-context adaptation}.
\textbf{Positional encoding} methods modify RoPE-style position representations to
improve length extrapolation. Position Interpolation (PI)~\citep{chen2023extending}
rescales position indices and relies on substantial continued training to adapt to
longer contexts. Subsequent schemes such as YaRN~\citep{peng2024yarn} and
LongRoPE~\citep{ding2024longrope} introduce frequency-aware or non-uniform scaling,
reducing the amount of long-context continued training relative to PI.
These methods address \emph{where} to attend but retain quadratic complexity and do not alter how attention organizes context.
\textbf{Training and adaptation} methods address long-context fine-tuning with varying efficiency. Early work such as Focused Transformer~\citep{tworkowski2023focused} employs specialized
training objectives, but remains computationally intensive (128 TPUs).
More efficient alternatives have since emerged:
LongLoRA~\citep{chen2024longlora} combines shifted sparse attention with LoRA,
enabling 100k context on 8$\times$A100;
PoSE~\citep{zhu2024pose} simulates long positions within fixed windows;
LongAlign~\citep{bai2024longalign} accelerates training via packing strategies. Despite substantially reducing adaptation cost, these methods lack an explicit
mechanism for organizing and globally sharing contextual information. HiCI builds upon LongLoRA while introducing hierarchical context organization,
constructing local-to-global abstractions that condition token-level attention
(Section~\ref{sec:method}).

\subsection{Segment-based Long-context Modeling}
The $\mathcal{O}(L^2)$ cost of self-attention motivates segment-wise processing, trading direct cross-segment interaction for efficiency. Existing approaches differ in how they restore this connectivity. \textbf{Recurrence-based methods} propagate information through sequential state updates across segments.  
Transformer-XL~\citep{dai2019transformerxl} caches hidden states from prior segments and attends to them as
extended context, RMT~\citep{bulatov2022recurrent} transmits learnable memory tokens across segment       
boundaries, and Block-Recurrent Transformer~\citep{hutchins2022block} combines block-level recurrence with 
attention for improved parallelism. Despite their effectiveness, sequential dependencies limit parallel    
training and risk information attenuation over long distances. \textbf{Compression-based methods} summarize past segments into fixed-capacity
representations. Compressive Transformer~\citep{Rae2020Compressive} learns to compress older memories, while Infini-attention~\citep{munkhdalai2024leave} incrementally updates     
a compressive state via linear attention.     
These approaches bound memory but sacrifice fine-grained fidelity~\citep{xu2026rethinking, xu2026vla}.
\textbf{Hierarchical methods} construct multi-level abstractions.       
HMT~\citep{he2025hmt} maintains a memory hierarchy with segment summarization, Block Transformer~\citep{ho2024block} separates global block-level and local token-level attention into distinct modules, bypassing token-level KV cache for faster inference, and EM-LLM~\citep{fountas2025emllm} segments via Bayesian surprise inspired by episodic memory. 

In addition to these explicit mechanisms, LongLoRA~\citep{chen2024longlora} partitions attention into local groups and enables implicit interaction via shifted grouping across heads.
In summary, existing segment-based methods restore cross-segment connectivity at the cost of parallelism,         
fidelity, or explicit semantic organization. 
Motivated by Construction–Integration~\citep{kintsch1988role} and    
Global Workspace Theory~\citep{baars1993cognitive}, 
HiCI addresses these limitations: segment-local  
representations are constructed via cross-attention, integrated into global context, and both are concatenated    
with original tokens in KV space—enabling parallel,
semantically explicit conditioning over long contexts.

\subsection{Global Workspace Architectures and Latent Representations}

Global Workspace Theory (GWT)~\citep{baars1993cognitive} posits that
cognition arises from a shared broadcast mechanism that integrates and
disseminates information across specialised modules.
\citet{vanrullen2021deep} translate this framework into design principles
for deep networks; \citet{DBLP_conf_iclr_GoyalDLBKRBBMB22, zeng2024a} instantiate a shared workspace for inter-module communication in modular networks, while \citet{hong2024concept} extend the motif to concept-centric transformer representations.
Complementary to architectural coordination, recent work explores whether discrete tokens are the appropriate medium for language model computation. Coconut~\citep{hao2025training} replaces chain-of-thought
tokens with continuous latent states for multi-step reasoning, whereas Large Concept Models~\citep{barrault2024large} predict sentence-level embeddings instead of next tokens.
HiCI is inspired by the GWT intuition of constructing and integrating global context, but differs in architectural scope: rather than coordinating distinct functional modules or concept abstractions, it performs cross-segment context aggregation within a single transformer layer for long-context language modeling. Unlike latent-space approaches,
HiCI preserves standard next-token autoregressive prediction; its global context vectors are auxiliary attention prefixes injected as keys and values rather than latent reasoning states or prediction targets.

\section{Hierarchical Construction–Integration Attention}
\label{sec:method}
We present HiCI, a lightweight attention module that instantiates a
cognitively motivated inductive bias for long-context modeling.
HiCI organizes attention computation into three stages—local construction,
global integration, and top-down broadcast—mirroring the hierarchical process of human discourse comprehension.

\subsection{Overview}
  \label{sec:overview}
Standard self-attention induces pairwise interactions among all $T$ tokens, resulting in $\mathcal{O}(T^2)$ computational complexity~\citep{vaswani2017attention}. A widely adopted alternative is segmented attention, which partitions the input into fixed-length segments and restricts attention to within-segment interactions, reducing the complexity to $\mathcal{O}(T \cdot S)$. However, such formulations lack an explicit mechanism for propagating information across segments. HiCI addresses this limitation through structured context conditioning: it dynamically constructs compact local and global representations from the input and injects them back into each block's attention computation.

Given an input sequence $X \in \mathbb{R}^{T \times d}$, assuming $T$ is divisible
by the segment length $S$, we partition it into
$N = T / S$ segments ${X_1, \ldots, X_N}$ and proceed as follows (Figure~\ref{fig:method}):

\begin{enumerate}[leftmargin=*, itemsep=2pt, topsep=3pt]
\item \textbf{Local Construction} (\S\ref{sec:local}): For each
segment $X_i \in \mathbb{R}^{S \times d}$, cross-attention with $M$ learnable query slots
extracts a local representation $L_i \in \mathbb{R}^{M \times d}$.
\item \textbf{Global Integration} (\S\ref{sec:global}): The local representations $\{L_i\}_{i=1}^{N}$ are aggregated into a shared global context $G \in \mathbb{R}^{K \times d}$ via multi-view statistical pooling followed by attention-based weighting.

\item \textbf{Top-down Broadcast} (\S\ref{sec:broadcast}):
The global context $G$ and segment-specific abstraction $L_i$ are prepended to the key--value sequence of each segment $X_i$, conditioning token-level updates on hierarchical context while preserving parallelism across segments.
\end{enumerate}
Throughout, the cardinalities $M$ and $K$ are fixed constants independent of the sequence length $T$.

\begin{figure*}[t]
\centering
\includegraphics[width=0.75\textwidth]{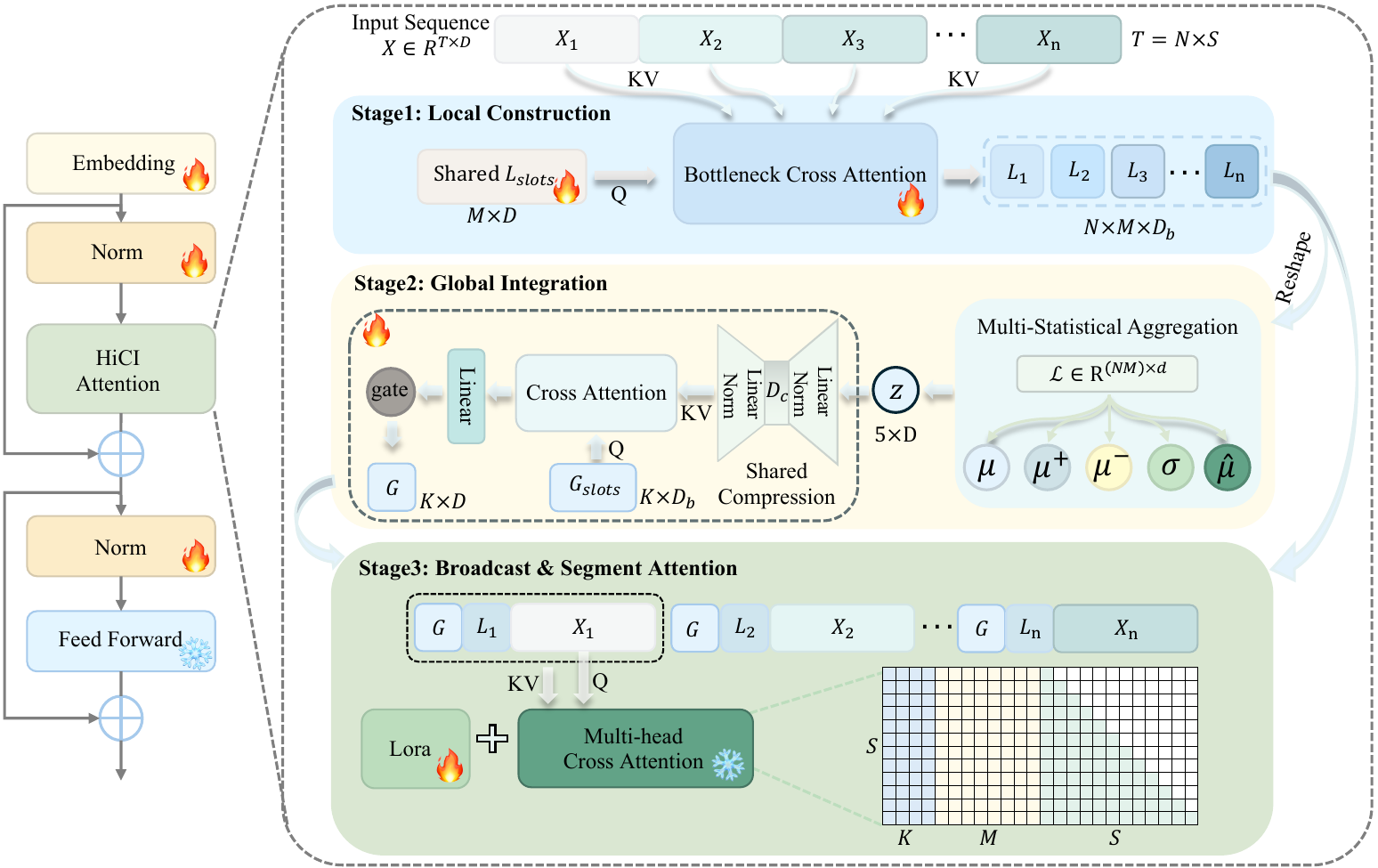}
\caption{
\textbf{Overview of HiCI.} \textit{Left:} HiCI integrated into a Transformer block; trainable components are highlighted.
\textit{Right:} HiCI constructs hierarchical context through three stages.     
\textbf{(1) Local Construction:} the input sequence is partitioned into $N$     
segments, and cross-attention with $M$ learnable query slots extracts a local  
representation $L_i$ from each segment. 
\textbf{(2) Global Integration:} local representations $\{L_i\}_{i=1}^N$ are aggregated into a shared global context $G$ via multi-view statistical pooling and attention-based weighting.
\textbf{(3) Top-down Broadcast:} $G$ and $L_i$ are prepended to each segment's key--value sequence, conditioning attention on hierarchical context while preserving parallelism across segments.
At inference, HiCI is optionally applied during prefill, while autoregressive decoding uses standard attention.
}
\label{fig:method}
\end{figure*}

\subsection{Local Construction}
\label{sec:local}

The first stage performs \emph{local construction}, distilling each input
segment $X_i \in \mathbb{R}^{S \times d}$ into a compact representation
$L_i \in \mathbb{R}^{M \times d}$, where $M \ll S$ is a small, sequence-length-independent constant, 

consistent with the limited capacity of human working memory~\citep{miller1956magical,cowan2001magical}.

\textbf{Bottleneck Cross-Attention.}
We introduce $M$ learnable slot vectors $L_{\text{slot}} \in \mathbb{R}^{M \times d}$,
shared across all segments, which serve as queries attending to segment tokens
via multi-head cross-attention.

To improve parameter efficiency and induce abstraction, attention is computed in a low-dimensional subspace $\mathbb{R}^{d_b}$ with $d_b \ll d$.

Formally, for each segment $X_i \in \mathbb{R}^{S \times d}$, the local
representation $L_i \in \mathbb{R}^{M \times d}$ is computed as
\begin{align}
\tilde{L}_i
&=
\mathrm{softmax}\!\left(
\frac{(L_{\text{slot}} W_Q^{\ell})
(X_i W_K^{\ell})^\top}{\sqrt{d_k}}
\right)
(X_i W_V^{\ell}), \\
L_i
&=
\tilde{L}_i W_O^{\ell},
\end{align}
where
$\{W_Q^{\ell}, W_K^{\ell}, W_V^{\ell}\} \in \mathbb{R}^{d \times d_b}$
and
$W_O^{\ell} \in \mathbb{R}^{d_b \times d}$
are learned projections, with $H$ attention heads of dimension $d_k = d_b / H$.

The bottleneck $(M,d_b)$ defines a fixed-capacity interface that favors salient
segment-level structure over fine-grained token detail. Aggregating the resulting $\{L_i\}_{i=1}^{N}$ yields
$L \in \mathbb{R}^{N \times M \times d}$ for subsequent integration. A formal treatment of this constraint is given in Appendix~\ref{app:theory}.

\subsection{Global Integration}
\label{sec:global}
Given the stacked local representations $L \in \mathbb{R}^{N \times M \times d}$, the global integration stage consolidates segment-level information into a compact global context $G \in \mathbb{R}^{K \times d}$, where a small $K$ reflects the
capacity constraints of a global workspace~\citep{baars1993cognitive}.

\textbf{Multi-View Statistical Aggregation.}
We collapse the segment and slot dimensions of $L \in \mathbb{R}^{N \times M \times d}$ into a single axis, yielding $\mathcal{L} \in \mathbb{R}^{(N M) \times d}$, and compute five
complementary statistics over this axis:

\begin{align}
\boldsymbol{\mu} &= \frac{1}{N M} \sum_{j=1}^{N M} \mathcal{L}_j, \label{eq:stat-mean} \\
\boldsymbol{\mu}^{+} &= \max_{j} \mathcal{L}_j, \quad
\boldsymbol{\mu}^{-} = \min_{j} \mathcal{L}_j, \label{eq:stat-minmax} \\
\boldsymbol{\sigma} &= \sqrt{\frac{1}{N M} \sum_{j=1}^{N M} (\mathcal{L}_j - \boldsymbol{\mu})^2}, \label{eq:stat-std} \\
\hat{\boldsymbol{\mu}} &= \boldsymbol{\mu} / \|\boldsymbol{\mu}\|_2. \label{eq:stat-dir}
\end{align}

Each statistic lies in $\mathbb{R}^d$ and captures a complementary aspect of the
aggregated representations: $\boldsymbol{\mu}$ reflects central tendency,
$\boldsymbol{\mu}^{+}$ and $\boldsymbol{\mu}^{-}$ capture element-wise extremal
activations, $\boldsymbol{\sigma}$ measures dispersion, and
$\hat{\boldsymbol{\mu}}$ encodes directional information independent of
magnitude via $\ell_2$-normalization.

\textbf{Shared Compression.}
We organize the five statistics into a matrix
\begin{equation}
\mathbf{Z} = \big[\, \boldsymbol{\mu};\ \boldsymbol{\mu}^{+};\ \boldsymbol{\mu}^{-};\ \boldsymbol{\sigma};\ \hat{\boldsymbol{\mu}} \,\big] \in \mathbb{R}^{5 \times d},
\label{eq:stats-matrix}
\end{equation}
where each row corresponds to one statistical view.
Rather than learning separate projections, we apply a shared two-stage compression
$\phi: \mathbb{R}^{d} \rightarrow \mathbb{R}^{d_b}$:
\begin{equation}
\tilde{\mathbf{Z}} = \phi(\mathbf{Z}) = \psi_b \circ \psi_c(\mathbf{Z}),
\label{eq:shared-compress}
\end{equation}
where $\psi_c(\cdot) = \mathrm{LayerNorm}(\cdot \, W_c)$ with $W_c \in \mathbb{R}^{d \times d_s}$,
and $\psi_b(\cdot) = \mathrm{LayerNorm}(\cdot \, W_b)$ with $W_b \in \mathbb{R}^{d_s \times d_b}$.
The intermediate bottleneck $d_s < d_b \ll d$ induces abstraction via an information
bottleneck~\citep{tishby2000information}, while parameter sharing enforces
consistent compression across heterogeneous statistical views.

\textbf{Attention-Based Selection.}
We introduce $K$ learnable query vectors $Q_G \in \mathbb{R}^{K \times d_b}$
that attend to the compressed statistics via multi-head cross-attention.
Formally,
\begin{align}
G_c &= \mathrm{softmax}\!\left(
\frac{(Q_G W_Q^{g})(\tilde{\mathbf{Z}} W_K^{g})^\top}{\sqrt{d_b/H}}
\right)
(\tilde{\mathbf{Z}} W_V^{g}),
\label{eq:global-attn}
\end{align}
where $\{W_Q^{g}, W_K^{g}, W_V^{g}\} \in \mathbb{R}^{d_b \times d_b}$ are learned projections
with $H$ attention heads and $d_b$ as in \S\ref{sec:local}.
The output is then projected back to the model dimension with a learnable gate:
\begin{equation}
G = G_c W_{\text{exp}} \cdot \alpha, \quad \alpha = \ln(1 + e^{\beta}),
\label{eq:expand}
\end{equation}
where $W_{\text{exp}} \in \mathbb{R}^{d_b \times d}$ and $\beta \in \mathbb{R}$ is a learnable scalar.
The constraint $\alpha > 0$ ensures stable scaling of the global context.
The resulting $G \in \mathbb{R}^{K \times d}$ serves as the global context for top-down broadcast (\S\ref{sec:broadcast}).

\subsection{Top-down Broadcast}
\label{sec:broadcast}

The final stage performs top-down broadcast, conditioning segment-level
attention on both the globally integrated context $G$ and the corresponding
local abstraction $L_i$.

For each segment $X_i \in \mathbb{R}^{S \times d}$, we form a context-augmented sequence by concatenating the global and local representations with the segment tokens:
\[
[\, G;\, L_i;\, X_i \,] \in \mathbb{R}^{(K+M+S)\times d}.
\]
The augmented sequence is projected into the key--value space as
\begin{equation}
\tilde{K}_i = [\, G;\, L_i;\, X_i \,] W_K^{b}, \qquad
\tilde{V}_i = [\, G;\, L_i;\, X_i \,] W_V^{b},
\label{eq:kv_aug}
\end{equation}
where $W_K^{b}, W_V^{b} \in \mathbb{R}^{d \times d}$. 

Queries are derived exclusively from segment tokens as
$Q_i = X_i W_Q^{b}$, where $W_Q^{b} \in \mathbb{R}^{d \times d}$.

Attention over the augmented context yields a context-conditioned update:
\begin{equation}
\tilde{X}_i
=
\mathrm{softmax}\!\left(
\frac{Q_i \tilde{K}_i^\top}{\sqrt{d/H}}
\right)
\tilde{V}_i
\in \mathbb{R}^{S \times d},
\label{eq:broadcast-attn}
\end{equation}
where $H$ is the number of attention heads.

Since each segment attends to its augmented context independently,
all $N$ segments can be processed in parallel. The refined segments are concatenated to form the output:
\begin{equation}
\tilde{X}
=
\mathrm{Concat}(\tilde{X}_1, \dots, \tilde{X}_N)
\in \mathbb{R}^{T \times d}.
\label{eq:hici-output}
\end{equation}

By jointly attending over all $K{+}M{+}S$ positions under a unified softmax,
each token integrates global, local, and segment-level context,
implementing top-down modulation (see Appendix~\ref{app:theory} for analysis).

\section{Experiments}
\label{sec:experiments}
In this section, we evaluate the effectiveness of HiCI across language modeling(\S\ref{sec:lm}), retrieval (\S\ref{sec:retrieval}), and downstream benchmarks      
(\S\ref{sec:downstream}), followed by ablation studies (\S\ref{sec:ablation}). Additional attention   
analysis is given in the Appendix~\ref{app:attention_analysis}.

\subsection{Experimental Setup}
\label{sec:exp_setup}
\begin{table*}[ht!]
\centering  
\caption{Perplexity ($\downarrow$) on PG-19 and Proof-pile test sets for LLaMA-2-7B/13B and Qwen3-8B continually pre-trained on RedPajama for 1k training steps. LLaMA-2 models are trained with context lengths ranging from 8K to 100K and evaluated up to 100K; Qwen3-8B extends its original 32K context window to 48K and is evaluated up to 48K, with an additional 500-step variant reported.
}
\label{tab:ppl}         
\small 
\setlength{\tabcolsep}{1.9pt}
\renewcommand{\arraystretch}{1}
\newcommand{\shade}{\cellcolor{cyan!8}}   
\begin{tabular}{ll l|ccccccc|ccccccc@{\hspace{\tabcolsep}}}
\toprule
& & &
\multicolumn{7}{c|}{\textbf{PG-19}} &
\multicolumn{7}{c}{\textbf{Proof-pile}} \\
\cmidrule(lr){4-10} \cmidrule(lr){11-17}
\multirow{-2}{*}{\textbf{Base Model}}
& \multirow{-2}{*}{\textbf{Train}}
& \multirow{-2}{*}{\textbf{Method}}
& \textbf{2K} & \textbf{4K} & \textbf{8K} & \textbf{16K} & \textbf{32K} & \textbf{64K} & \textbf{100K}
& \textbf{2K} & \textbf{4K} & \textbf{8K} & \textbf{16K} & \textbf{32K} & \textbf{64K} & \textbf{100K} \\
\midrule
\multirow{10}{*}[-2.2ex]{LLaMA-2-7B}
& \multirow{2}{*}{4k}
& LLaMA-2-7B
& 7.49 & 7.15 & $>\!10^2$ & $>\!10^2$ & $>\!10^2$ & $>\!10^2$ & $>\!10^2$
& 3.21 & 2.91 & $>\!10^2$ & $>\!10^2$ & $>\!10^2$ & $>\!10^2$ & $>\!10^2$ \\
& & ChunkLLaMA
& 7.49 &  7.15 &  6.98 &  6.96 &  7.08 &  15.15 &  --
&  3.21 &  2.91 &  2.75 &  2.69 & 2.70 &  2.75 &  -- \\
\cmidrule(l){2-17}
& \multirow{2}{*}{8K}
& LongLoRA
& 7.70 & 7.35 & 7.14 & -- & -- & -- & --
& 3.20 & 2.91 & 2.72 & -- & -- & -- & -- \\
& & HiCI
& \shade 7.27 & \shade 7.01 & \shade 6.93 & \shade -- & \shade -- & \shade -- & \shade --
& \shade 3.07 & \shade 2.82 & \shade 2.65 & \shade -- & \shade -- & \shade -- & \shade -- \\
\cmidrule(l){2-17}
& \multirow{2}{*}{16K}
& LongLoRA
& 7.65 & 7.28 & 7.02 & 6.86 & -- & -- & --
& 3.17 & 2.87 & 2.66 & 2.51 & -- & -- & -- \\
& & HiCI
& \shade 7.53 & \shade 7.21 & \shade 6.96 & \shade 6.84 & \shade -- & \shade -- & \shade --
& \shade 3.15 & \shade 2.84 & \shade 2.61 & \shade 2.47 & \shade -- & \shade -- & \shade -- \\
\cmidrule(l){2-17}
& \multirow{2}{*}{32K}
& LongLoRA
& 8.29 & 7.83 & 7.54 & 7.35 & 7.22 & -- & --
& 3.35 & 3.01 & 2.78 & 2.61 & 2.50 & -- & -- \\
& & HiCI
& \shade 7.87 & \shade 7.50 & \shade 7.26 & \shade 7.09 & \shade 7.11 & \shade -- & \shade --
& \shade 3.21 & \shade 2.87 & \shade 2.71 & \shade 2.58 & \shade 2.49 & \shade -- & \shade -- \\
\cmidrule(l){2-17}
& \multirow{2}{*}{100K}
& LongLoRA
& 8.38 & 7.90 & 7.57 & 7.33 & 7.16 & 7.06 & 7.04
& 3.36 & 3.01 & 2.78 & 2.60 & 2.58 & 2.57 & 2.52 \\
& & HiCI
& \shade 7.81 & \shade 7.72 & \shade 7.45 & \shade 7.26 & \shade 7.08 & \shade 6.97 & \shade 6.95
& \shade 3.27 & \shade 2.86 & \shade 2.73 & \shade 2.54 & \shade 2.48 & \shade 2.46 & \shade 2.43 \\
\midrule
\multirow{10}{*}[-2.2ex]{LLaMA-2-13B}
& \multirow{2}{*}{4k}
& LLaMA-2-13B
& 6.86 & 6.55 & $>\!10^2$ & $>\!10^2$ & $>\!10^2$ & $>\!10^2$ & $>\!10^2$
& 3.04 & 2.78 & 76.05 & $>\!10^2$ & $>\!10^2$ & $>\!10^2$ & $>\!10^2$ \\
& & ChunkLLaMA
&  6.86 &  6.55 &  6.37 &  6.35 &  6.46 &  14.36 &  --
&  3.04 &  2.78 &  2.62 &  2.55 &  2.58 &  2.64 & -- \\
\cmidrule(l){2-17}
& \multirow{2}{*}{8K}
& LongLoRA
& 7.03 & 6.73 & 6.58 & -- & -- & -- & --
& 3.04 & 2.77 & 2.60 & -- & -- & -- & -- \\
& & HiCI
& \shade 6.68 & \shade 6.46 & \shade 6.34 & \shade -- & \shade -- & \shade -- & \shade --
& \shade 2.91 & \shade 2.69 & \shade 2.52 & \shade -- & \shade -- & \shade -- & \shade -- \\
\cmidrule(l){2-17}
& \multirow{2}{*}{16K}
& LongLoRA
& 7.05 & 6.70 & 6.47 & 6.31 & -- & -- & --
& 3.03 & 2.74 & 2.55 & 2.41 & -- & -- & -- \\
& & HiCI
& \shade 6.95 & \shade 6.65 & \shade 6.43 & \shade 6.28 & \shade -- & \shade -- & \shade --
& \shade 2.99 & \shade 2.73 & \shade 2.53 & \shade 2.40 & \shade -- & \shade -- & \shade -- \\
\cmidrule(l){2-17}
& \multirow{2}{*}{32K}
& LongLoRA
& 7.05 & 6.70 & 6.47 & 6.31 & 6.20 & -- & --
& 3.03 & 2.74 & 2.55 & 2.41 & 2.32 & -- & -- \\
& & HiCI
& \shade 6.94 & \shade 6.56 & \shade 6.39 & \shade 6.25 & \shade 6.17 & \shade -- & \shade --
& \shade 2.94 & \shade 2.68 & \shade 2.40 & \shade 2.35 & \shade 2.26 & \shade -- & \shade -- \\
\cmidrule(l){2-17}
& \multirow{2}{*}{64K}
& LongLoRA
& 7.63 & 7.21 & 6.94 & 6.75 & 6.62 & 6.53 & --
& 3.05 & 2.76 & 2.57 & 2.42 & 2.32 & 2.25 & -- \\
& & HiCI
& \shade 7.40 & \shade 7.06 & \shade 6.81 & \shade 6.62 & \shade 6.47 & \shade 6.39 & \shade --
& \shade 2.96 & \shade 2.63 & \shade 2.38 & \shade 2.31 & \shade 2.20 & \shade 2.17 & \shade -- \\
\midrule
\multicolumn{3}{c|}{} &


\textbf{2K} & \textbf{4K} & \textbf{8K} & \textbf{16K} & \textbf{32K} & \textbf{48K} & {--} &
\textbf{2K} & \textbf{4K} & \textbf{8K} & \textbf{16K} & \textbf{32K} & \textbf{48K} & {--} \\

\midrule

\multirow{3}{*}[-0.5ex]{Qwen3-8B}
& {32k}
& Qwen3-8B
& 13.26 & 12.58 & 12.09 & 11.72 & 12.76 & 11.32 & --
& 3.24 & 2.94 & 2.73 & 2.59 & 2.49 & 2.46 & -- \\
\cmidrule(l){2-17}

& \multirow{2}{*}{48K}
& HiCI (500s)
& \shade 11.71 & \shade 11.06 & \shade 10.59 & \shade 10.24 & \shade 9.98 & \shade 9.89 & \shade --
& \shade 3.01 & \shade 2.73 & \shade 2.54 & \shade 2.41 & \shade 2.32 & \shade 2.29 & \shade -- \\

& 
& HiCI
& \shade 11.46 & \shade 10.84 & \shade 10.38 & \shade 10.06 & \shade 9.82 & \shade 9.73 & \shade --
& \shade 2.95 & \shade 2.68 & \shade 2.50 & \shade 2.37 & \shade 2.30 & \shade 2.26 & \shade -- \\
\bottomrule
\end{tabular}
\end{table*}

\textbf{Models.}
We evaluate HiCI on pretrained LLaMA-2~\citep{touvron2023llama2} 7B/13B and Qwen3-8B~\citep{yang2025qwen3} models, extending their context windows using Position Interpolation~\citep{chen2023extending} to 100K/64K tokens for LLaMA-2-7B/13B and 48K for Qwen3-8B.

\textbf{Training.} 
Following LongLoRA~\citep{chen2024longlora}, we perform two-stage LoRA
fine-tuning: continued pretraining on RedPajama~\citep{together2023redpajama} with the next-token prediction objective, then instruction tuning on LongAlpaca-12k~\citep{chen2024longlora}, training only the HiCI module,
LoRA adapters, embeddings, and normalization layers.
Optimization is performed with AdamW ($\beta_1{=}0.9$, $\beta_2{=}0.95$, weight
decay $0$), using a learning rate of $2{\times}10^{-5}$ for the backbone and
$2{\times}10^{-4}$ for HiCI with a 20-step linear warmup.
Unless otherwise specified, we train for 1{,}000 steps with per-device
batch size 1 and gradient accumulation 8, yielding an effective batch size of 64. All experiments use bf16 precision with DeepSpeed ZeRO-2~\citep{rasley2020deepspeed} and Flash-Attention2~\citep{dao2024flashattention}, running on 8$\times$H100 GPUs for LLaMA-2 and 8$\times$H200 GPUs for Qwen3-8B; full hyperparameter details are provided in Appendix~\ref{app:hyperparameters}.

\textbf{Evaluation.}
We adopt the two-stage evaluation protocol of LongLoRA.
Stage~1 assesses long-context language modeling and retrieval: we report perplexity on   
PG-19~\citep{Rae2020Compressive} and Proof-pile~\citep{azerbayev2022proofpile} using a sliding window with stride 
256~\citep{DBLP:journals/corr/abs-2108-12409} 
and the same hierarchical attention as training, along with passkey 
retrieval~\citep{mohtashami2023landmark} and topic retrieval~\citep{li2023how}.
Stage~2 evaluates downstream instruction-following on LongBench~\citep{bai2024longbench}  
under two inference      
modes: standard full attention and HiCI attention during prefill. 

\subsection{Language Modeling}
\label{sec:lm}
We evaluate perplexity on PG-19~\citep{Rae2020Compressive} and Proof-pile~\citep{azerbayev2022proofpile}, comparing HiCI against LongLoRA~\citep{chen2024longlora} and ChunkLLaMA~\citep{DBLP_conf_icml_An0ZGQZK24} for LLaMA-2-7B/13B (training lengths 8K--100K, evaluation up to 100K), and against the base model for Qwen3-8B under context extension from 32K to 48K, as reported in Table~\ref{tab:ppl}. The longest-context settings (100K for LLaMA-2-7B and 64K for LLaMA-2-13B) use DeepSpeed Stage-3~\citep{rajbhandari2020zero} with adjusted group configurations; details are provided in Appendix~\ref{app:hyperparameters}.
HiCI reduces perplexity across all evaluation lengths for both model families, indicating that hierarchical context organization enhances language modeling quality beyond context extension alone.
For LLaMA-2-7B/13B, HiCI consistently outperforms LongLoRA across model scales and training lengths, with ChunkLLaMA serving as a training-free reference. 
The performance gap is more pronounced at shorter evaluation lengths: for the 100K-trained LLaMA-2-7B, HiCI achieves a 6.8\% reduction at 2K evaluation, compared to 1.3\% at 100K, suggesting better preservation of short-range modeling quality. For Qwen3-8B, HiCI achieves a 14\% perplexity reduction on PG-19 and 8\% on Proof-pile at 48K, demonstrating that the hierarchical context mechanism transfers to models with stronger native context; where a 500-step variant already captures most of the gain.

\subsection{Retrieval-based Evaluation}
\label{sec:retrieval}

\textbf{Topic Retrieval.} 
We evaluate on the LongChat topic retrieval task~\citep{li2023how}, which requires identifying a target topic  
from multi-turn dialogues spanning 3K--16K tokens. As shown in Table~\ref{tab:topic}, while closed-source models such as GPT-4o-mini-128K~\citep{hurst2024gpt} and Claude-3.5-Sonnet-200K~\citep{anthropic2024claude} achieve perfect accuracy, open-source alternatives show notable degradation: models with shorter context windows (e.g., ChatGLM2-6B-8k~\citep{du2022glm}, MPT-30B-Chat-8k~\citep{MosaicML2023mpt}, and Llama-3-8B-Instruct-8K~\citep{grattafiori2024llama}) fail beyond their training length, and even MPT-7B-StoryWriter-65K~\citep{MosaicML2023mpt} achieves only 0.28--0.46 across all lengths. In contrast,      
HiCI-13B-16K achieves the best accuracy among open-source models, matching 100\% up to 13K and reaching 0.94 at 16K, compared to 0.90 for LongChat-13B-16K~\citep{li2023how} and 0.86 for LongLoRA-13B-16K~\citep{chen2024longlora}. We conjecture that HiCI's stability is driven by a hierarchical inductive bias: segment-level construction learns content-dependent representations, while global integration forms position-invariant contextual representations, reducing sensitivity to where evidence appears in the sequence.

\begin{table}[h!]
\centering
\caption{Topic retrieval accuracy on LongChat~\citep{li2023how}. We compare   
HiCI against both proprietary models and open-source long-context LLMs across 3K--16K context lengths. HiCI-13B-16K 
matches proprietary model performance up to 13K and outperforms all             
open-source baselines at 16K.}  
\label{tab:topic}
\small
\setlength{\tabcolsep}{5pt}
\renewcommand{\arraystretch}{1}
\begin{tabular}{lccccc}
\toprule
\textbf{Model}& \textbf{3K} & \textbf{6K} & \textbf{10K} & \textbf{13K} & \textbf{16K} \\
\midrule
\rowcolor{gray!6}
GPT-4o-mini-128K           & 1.00 & 1.00 & 1.00 & 1.00 & 1.00 \\
Claude-3.5-Sonnet-200K             & 1.00 & 1.00 & 1.00 & 1.00 & 1.00 \\
\midrule
\rowcolor{gray!6}
MPT-30B-Chat-8K             & 0.96 & 1.00 & 0.76 & --   & --   \\
ChatGLM2-6B-8K              & 0.88 & 0.46 & 0.02 & 0.02 & 0.02 \\
\rowcolor{gray!6}
MPT-7B-StoryWriter-65K      & 0.46 & 0.46 & 0.28 & 0.34 & 0.36 \\
LongChat-13B-16K            & 1.00 & 1.00 & 1.00 & 0.98 & 0.90 \\
\rowcolor{gray!6}
LongLoRA-13B-16K$^\dagger$  & 1.00 & 0.96 & 1.00 & 0.98 & 0.86 \\
Llama-3-8B-Instruct-8K  & 1.00 & 1.00 & 0.00 & 0.00 & 0.00 \\
\rowcolor{softpink}
\textbf{HiCI-13B-16K (Ours)} &
\textbf{1.00} & \textbf{1.00} & \textbf{1.00} & \textbf{1.00} & \textbf{0.94} \\
\bottomrule
\end{tabular}
\raggedright
{\footnotesize $^\dagger$ Evaluated with official LoRA weights.}
\end{table}

\begin{figure*}[t]
\centering
\includegraphics[width=\textwidth]{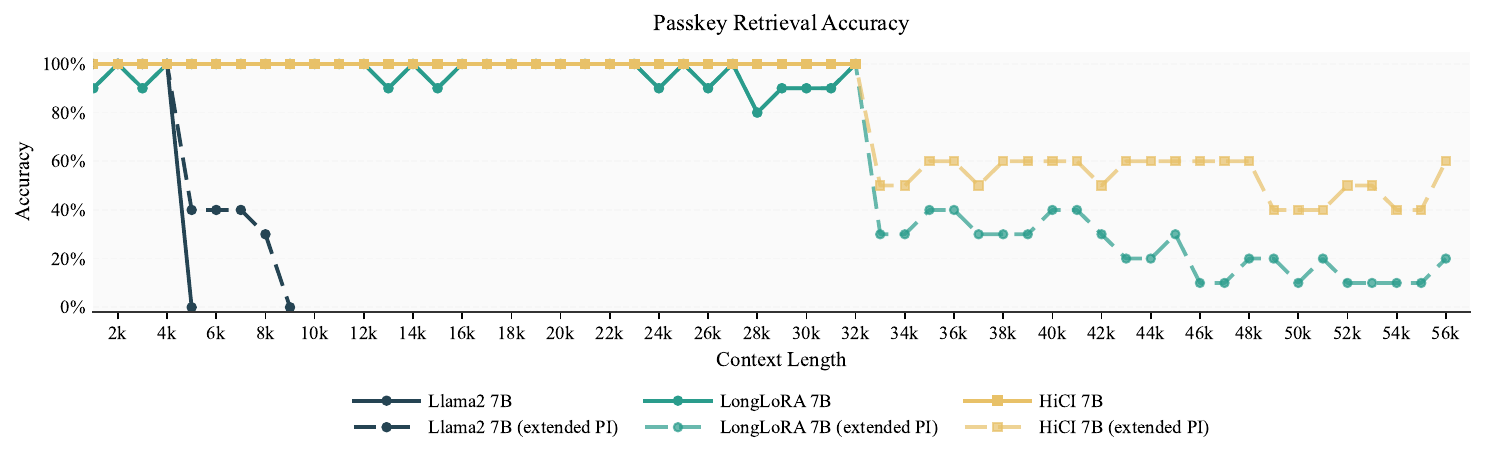}
\caption{Passkey retrieval accuracy for LongLoRA-7B, HiCI-7B (both fine-tuned at 32K), and base LLaMA-2-7B. HiCI achieves 100\% accuracy within the training length and extrapolates more gracefully to 56K via position interpolation without additional fine-tuning.} 
\label{fig:passkey}
\end{figure*}

\textbf{Passkey Retrieval.}
We evaluate passkey retrieval following~\citet{mohtashami2023landmark}, where models are required to locate and output a random passkey embedded within long distractor text.
For each context length, we conduct 10 trials with randomized passkey values and insertion positions.
Figure~\ref{fig:passkey} compares HiCI-7B-32K, LongLoRA-7B-32K~\citep{chen2024longlora}, and the base LLaMA-2-7B model~\citep{touvron2023llama2}.
Within the 32K training regime, HiCI achieves 100\% retrieval accuracy across all evaluated lengths, whereas LongLoRA exhibits non-monotonic behavior with accuracy fluctuating between 80\% and 100\%, and the base LLaMA-2-7B model~\citep{touvron2023llama2}, constrained by its native 4K context window, fails to retrieve passkeys beyond this length.
To assess length extrapolation, we extend the maximum context at inference time to 56K using position interpolation (PI)~\citep{chen2023extending}, without any additional fine-tuning, following~\citet{chen2024longlora}. Beyond the 32K, both fine-tuned models exhibit degradation, consistent with the known sensitivity of RoPE-based positional encoding to out-of-distribution positions.
Notably, HiCI degrades more gracefully, maintaining 40--60\% retrieval accuracy over the 33K--56K range, compared to LongLoRA’s 10--30\% accuracy under the same setting. These results suggest that HiCI's training-time inductive bias may yield representations more robust to position extrapolation.

\subsection{Downstream Tasks}
\label{sec:downstream}

\begin{table*}[t]
\caption{Results (\%) on LongBench~\citep{bai2024longbench} benchmark. Best in \textbf{bold}, second \underline{underlined}. {\color{teal}$\uparrow$} indicates improvement over LongLoRA-7B-16K (direct baseline) and {\color{orange}$\uparrow$} highlights top-2 gains for each HiCI variant.}
\label{tab:longbench}  
\centering  
\small      
\setlength{\tabcolsep}{4.85pt}  
\begin{tabular}{l|cccccc|ccc}  
\toprule     
\multirow{2}{*}[-0.85ex]{\textbf{Model}}     
& \multirow{2}{*}[-0.85ex]{\textbf{Single-Doc QA}
}  
& \multirow{2}{*}[-0.85ex]{\textbf{Multi-Doc QA}} 
& \multirow{2}{*}[-0.85ex]{\textbf{Summ}} 
& \multirow{2}{*}[-0.85ex]{\textbf{Few-shot}}  
& \multirow{2}{*}[-0.85ex]{\textbf{Synthetic}} 
& \multirow{2}{*}[-0.85ex]{\textbf{Code}}  
& \multicolumn{3}{c}{\textbf{Overall}} \\ 
\cmidrule(lr){8-10}  
&  &  &  &  &  &  & \textbf{EN} & \textbf{ZH} & \textbf{All} \\ 
\midrule     
GPT-3.5-Turbo-16k   
& \textbf{45.1} & \textbf{36.2} & \underline{23.9} & \textbf{57.6} & \textbf{51.0} & 54.1                 
& \textbf{44.0} & \textbf{44.5} & \textbf{44.7} \\          
\midrule     
Llama2-7B-chat-4k    
& 21.7 & 18.2 & 18.5 & 49.9 & 4.1 & 48.1 
& 31.0 & 14.3 & 26.8 \\ 
LongChat-7B-32k  
& 28.8 & 20.3 & 22.5 & 50.8 & \underline{13.0} & 54.1       
& 34.3 & 23.9 & 31.6 \\ 
Vicuna-v1.5-7B-16k   
& \underline{31.8} & 18.8 & 23.2 & 56.8 & 5.3 & 47.3       
& 31.9 & \underline{26.4} & 30.5 \\
LongLoRA-7B-16k  
& 23.7 & 25.0 & 20.9 & 54.2 & 12.0 & 55.8  
& \underline{36.8} & 10.9 & 30.6 \\ 
\midrule 
\rowcolor{cyan!6}     
\textbf{HiCI-7B-16k}  
& 31.1$^{{\color{orange}\uparrow\textbf{7.4}}}$  
& \underline{26.8}$^{{\color{teal}\uparrow\text{1.8}}}$    
& 23.6$^{{\color{teal}\uparrow\text{2.7}}}$   
& \underline{57.1}$^{{\color{teal}\uparrow\text{2.9}}}$   
& 5.8       
& \underline{62.0}$^{{\color{teal}\uparrow\text{6.2}}}$  
&36.4&22.7$^{{\color{orange}\uparrow\textbf{11.8}}}$& \underline{33.2}$^{{\color{teal}\uparrow\text{2.6}}}$ \\ 
\rowcolor{cyan!6} 
\textbf{HiCI-7B-16k}$^\dagger$  
& 29.9$^{{\color{teal}\uparrow\text{6.2}}}$  
& 24.5     
& \textbf{24.6}$^{{\color{teal}\uparrow\text{3.7}}}$      
& 57.0$^{{\color{teal}\uparrow\text{2.8}}}$ 
&6.1&\textbf{63.8}$^{{\color{orange}\uparrow\textbf{8.0}}}$   
& 35.8   
& 23.4$^{{\color{orange}\uparrow\textbf{12.5}}}$ 
& 32.9$^{{\color{teal}\uparrow\text{2.3}}}$ \\ 
\bottomrule   
\addlinespace[1mm]        
\multicolumn{10}{l}  
{\footnotesize $^\dagger$ Applies training-consistent HiCI attention during inference prefill.}    
\end{tabular}  
\end{table*} 

\textbf{LongBench.} LongBench~\citep{bai2024longbench} is a bilingual benchmark comprising 21 tasks across six categories,    
with average input lengths of 5K--15K tokens. We perform context extension on RedPajama       
(4K$\to$16K) followed by instruction tuning on LongAlpaca-12k~\citep{chen2024longlora}, using LoRA~\citep{hu2022lora} with trainable embedding and normalization layers as in LongLoRA~\citep{chen2024longlora}. 
We evaluate two inference modes: HiCI with standard full attention, and HiCI$^\dagger$ which     
applies training-consistent hierarchical attention during prefill to reduce time-to-first-token latency. As shown in Table~\ref{tab:longbench}, HiCI outperforms LongLoRA across most categories, achieving 33.2\%       
overall (+2.6\%). The gains are particularly pronounced on Single-Document QA (\textbf{+7.4\%}) and Chinese tasks (\textbf{+11.8\%}),      
suggesting that the hierarchical inductive bias benefits both localized comprehension and cross-lingual         
transfer. HiCI$^\dagger$, despite using hierarchical attention during prefill, maintains competitive performance (32.9\%) and   
surpasses all baselines including the proprietary model GPT-3.5-Turbo-16K~\citep{achiam2023gpt} on both Summarization (24.6\%, +0.7\%) and Code (63.8\%,      
\textbf{+9.7\%}) tasks. This indicates that the learned hierarchical structure transfers robustly even under efficient inference.

\subsection{Ablation Studies}  
\label{sec:ablation}   

We systematically evaluate HiCI along two axes: component contribution and representation cardinality. All experiments use LLaMA-2-7B as the base model and evaluate under standard full attention. Segment-size sensitivity is analyzed in Appendix~\ref{app:training_loss}.

\begin{table}     
\centering                                    
\caption{Component and cardinality ablation for HiCI fine-tuned on LLaMA-2-7B at 8K context for 1{,}000 steps.}               
\label{tab:ablation}                             
\small                                           
\setlength{\tabcolsep}{5.5pt}                    
\begin{tabular}{lccc|cc|cc}                      
\toprule                                         
\multirow{2}{*}[-0.8ex]{\textbf{Variant}} &      
\multirow{2}{*}[-0.8ex]{\textbf{L}} &            
\multirow{2}{*}[-0.8ex]{\textbf{G}} &
\multirow{2}{*}[-0.8ex]{\textbf{B}} &            
\multicolumn{2}{c|}{\textbf{PG19}} &              
\multicolumn{2}{c}{\textbf{Proof-pile}} \\
\cmidrule{5-6} \cmidrule{7-8}                    
& & & & \textbf{4K} & \textbf{8K} & \textbf{4K} & \textbf{8K} \\                                          
\midrule                                          
HiCI & \textcolor{lightblue}{\ding{51}} & \textcolor{lightblue}{\ding{51}} & \textcolor{lightblue}{\ding{51}} &          
\textbf{7.01} & \textbf{6.93} & \textbf{2.82} & \textbf{2.65} \\  
w/o G & \textcolor{lightblue}{\ding{51}} & \textcolor{gray}{\ding{55}} & \textcolor{lightblue}{\ding{51}} & 7.25 &
7.04 & 2.95 & 2.78 \\                             
w/o L & \textcolor{gray}{\ding{55}} & \textcolor{lightblue}{\ding{51}} & \textcolor{lightblue}{\ding{51}} &
\underline{7.13} & \underline{6.99} & \underline{2.86} & \underline{2.69} \\                              
Only Group & \textcolor{gray}{\ding{55}} & \textcolor{gray}{\ding{55}} & \textcolor{gray}{\ding{55}} & 8.01 &
7.54 & 3.26 & 2.97 \\                           
\midrule        
$M=5,\ K=3$ & \textcolor{lightblue}{\ding{51}} & \textcolor{lightblue}{\ding{51}} & \textcolor{lightblue}{\ding{51}} &   
7.15 & 6.98 & 2.85 & 2.68 \\                     
$M=8,\ K=4$ & \textcolor{lightblue}{\ding{51}} & \textcolor{lightblue}{\ding{51}} & \textcolor{lightblue}{\ding{51}} &
\textbf{7.01} & \textbf{6.93} & \textbf{2.82} & \textbf{2.65} \\ 
$M=9,\ K=7$ & \textcolor{lightblue}{\ding{51}} & \textcolor{lightblue}{\ding{51}} & \textcolor{lightblue}{\ding{51}} &
\underline{7.10} & \underline{6.96} & \underline{2.86} & \underline{2.69} \\                              
\bottomrule     
\end{tabular}                                     
\end{table}

\textbf{Component and Cardinality Analysis.}
To quantify the contribution of each HiCI component, we train variants under   
8K context for 1{,}000 steps and evaluate on PG-19 and Proof-pile test sets. As shown in Table~\ref{tab:ablation}, removing global integration (w/o G) incurs nearly twice the degradation of removing local 
construction (w/o L), revealing that cross-segment aggregation contributes more substantially than 
within-segment compression.  
This asymmetry is corroborated by attention visualizations in Appendix~\ref{app:attention_analysis}.              
The Only Group baseline---grouped attention without hierarchical modules---yields markedly inferior 
performance, underscoring that explicit integration is indispensable beyond attention sparsification alone.  
For representation capacity, $(M{=}8, K{=}4)$ attains optimal performance, aligning with Miller's 
$7\pm2$ working memory bound~\citep{miller1956magical}; smaller capacities $(5, 3)$ prove insufficient, while
larger ones $(9, 7)$ compromise length generalization.  

\section{Limitations}
HiCI currently improves efficiency primarily during training;
autoregressive decoding still relies on standard full attention with a
growing KV cache, and extending the hierarchical mechanism to the
decoding phase remains an open direction.
A second limitation concerns models with very long native context
windows (e.g., 128K tokens or beyond). HiCI is primarily motivated by
settings where pre-training context is limited; whether it remains
beneficial for models already pre-trained at such lengths has not been
empirically studied.
Finally, improving long-context efficiency may lower the barrier to
large-scale handling of highly coherent long-form content, which could
be misused in adversarial settings or involve increasingly sensitive
user-provided data. These broader risks require system-level safeguards
beyond the scope of this work.

\section{Conclusion}         
\label{sec:conclusion} 

We presented HiCI, a hierarchical attention framework that decomposes long-context attention into local construction, global integration, and top-down broadcast. With only 4--5\% additional trainable parameters, HiCI substantially extends the effective context capacity of LLaMA-2 to 100K (7B) and 64K (13B), and of Qwen3-8B to 48K tokens, and we evaluate it across language modeling, retrieval, and downstream long-context benchmarks.
On PG-19 and Proof-pile, HiCI lowers perplexity relative to LongLoRA and ChunkLLaMA across all evaluated context lengths on LLaMA-2, and yields a $14\%$ perplexity reduction on PG-19 and an $8\%$ reduction on Proof-pile for Qwen3-8B at 48K. HiCI further attains $100\%$ passkey retrieval accuracy within the training range and degrades gracefully beyond it. On topic retrieval, HiCI matches GPT-4o-mini-128K and Claude-3.5-Sonnet-200K at perfect ($1.00$) retrieval accuracy through 13K, and reaches $0.94$ at 16K, outperforming all evaluated open-source baselines. On LongBench, HiCI surpasses GPT-3.5-Turbo-16K on both code comprehension (+9.7\%) and summarization. 
Ablations show that both hierarchical stages are necessary, with global integration the larger contributor. Overall, these findings suggest that the construction--integration principle provides an effective inductive bias for long-context modeling. Extending this hierarchical conditioning to vision--language models, where visual tokens significantly increase sequence length, is a promising direction for future work.

\section*{Acknowledgements} 
This work was supported in part by the Australian Research Council under Projects DP240101848 and FT230100549.
   
\section*{Impact Statement}
This paper presents work whose goal is to advance the field
of Machine Learning. There are many potential societal
consequences of our work, none which we feel must be
specifically highlighted here.

\clearpage
\bibliography{example_paper}

@inproceedings{vaswani2017attention,
    title={Attention is All You Need},
    author={Vaswani, Ashish and Shazeer, Noam and Parmar, Niki and Uszkoreit, Jakob and Jones, Llion and Gomez, Aidan N and Kaiser, {\L}ukasz and Polosukhin, Illia},
    booktitle={Advances in Neural Information Processing Systems},
    volume={30},
    year={2017}
  }

@inproceedings{brown2020language,
    title={Language Models are Few-Shot Learners},
    author={Brown, Tom and Mann, Benjamin and Ryder, Nick and Subbiah, Melanie and Kaplan, Jared D and Dhariwal, Prafulla and Neelakantan, Arvind and Shyam, Pranav and Sastry, Girish and Askell, Amanda and others},
    booktitle={Advances in Neural Information Processing Systems},
    volume={33},
    pages={1877--1901},
    year={2020}
  }

@article{liu2024lost,
    title={Lost in the Middle: How Language Models Use Long Contexts},
    author={Liu, Nelson F and Lin, Kevin and Hewitt, John and Paranjape, Ashwin and Bevilacqua, Michele and Petroni, Fabio and Liang, Percy},
    journal={Transactions of the Association for Computational Linguistics},
    volume={12},
    pages={157--173},
    year={2024}
  }

@article{chen2023extending,
    title={Extending Context Window of Large Language Models via Positional Interpolation},
    author={Chen, Shouyuan and Wong, Sherman and Chen, Liangjian and Tian, Yuandong},
    journal={arXiv preprint arXiv:2306.15595},
    year={2023}
  }

@inproceedings{bulatov2022recurrent,
    title={Recurrent Memory Transformer},
    author={Bulatov, Aydar and Kuratov, Yuri and Burtsev, Mikhail},
    booktitle={Advances in Neural Information Processing Systems},
    volume={35},
    year={2022}
  }

@article{munkhdalai2024leave,
    title={Leave No Context Behind: Efficient Infinite Context Transformers with Infini-attention},
    author={Munkhdalai, Tsendsuren and Faruqui, Manaal and Gopal, Siddharth},
    journal={arXiv preprint arXiv:2404.07143},
    year={2024}
  }

@article{beltagy2020longformer,
    title={Longformer: The Long-Document Transformer},
    author={Beltagy, Iz and Peters, Matthew E and Cohan, Arman},
    journal={arXiv preprint arXiv:2004.05150},
    year={2020}
  }

@inproceedings{zaheer2020bigbird,
    title={Big Bird: Transformers for Longer Sequences},
    author={Zaheer, Manzil and Guruganesh, Guru and Dubey, Kumar Avinava and Ainslie, Joshua and Alberti, Chris and Ontanon, Santiago and Pham, Philip and Ravula, Anirudh and Wang, Qifan and Yang, Li and Ahmed, Amr},
    booktitle={Advances in Neural Information Processing Systems},
    volume={33},
    year={2020}
  }

@article{kintsch1988role,
    title={The Role of Knowledge in Discourse Comprehension: A Construction-Integration Model},
    author={Kintsch, Walter},
    journal={Psychological Review},
    volume={95},
    number={2},
    pages={163--182},
    year={1988}
  }

@article{touvron2023llama2,
    title={Llama 2: Open Foundation and Fine-Tuned Chat Models},
    author={Touvron, Hugo and Martin, Louis and Stone, Kevin and others},
    journal={arXiv preprint arXiv:2307.09288},
    year={2023}
  }

@inproceedings{
dao2024flashattention,
title={FlashAttention-2: Faster Attention with Better Parallelism and Work Partitioning},
author={Tri Dao},
booktitle={The Twelfth International Conference on Learning Representations},
year={2024},
url={https://openreview.net/forum?id=mZn2Xyh9Ec}
}

@inproceedings{
mohtashami2023landmark,
title={Landmark Attention: Random-Access Infinite Context Length for Transformers},
author={Amirkeivan Mohtashami and Martin Jaggi},
booktitle={Workshop on Efficient Systems for Foundation Models @ ICML2023},
year={2023},
url={https://openreview.net/forum?id=PkoGERXS1B}
}

@inproceedings{he2025hmt,
  title={Hmt: Hierarchical memory transformer for efficient long context language processing},
  author={He, Zifan and Cao, Yingqi and Qin, Zongyue and Prakriya, Neha and Sun, Yizhou and Cong, Jason},
  booktitle={Proceedings of the 2025 Conference of the Nations of the Americas Chapter of the Association for Computational Linguistics: Human Language Technologies (Volume 1: Long Papers)},
  pages={8068--8089},
  year={2025}
}

@article{miller1956magical,
  title     = {The Magical Number Seven, Plus or Minus Two: Some Limits on Our Capacity for Processing Information},
  author    = {Miller, G. A.},
  journal   = {Psychological Review},
  volume    = {63},
  number    = {2},
  pages     = {81--97},
  year      = {1956},
  publisher = {American Psychological Association}
}

@article{cowan2001magical,
  title     = {The Magical Number 4 in Short-Term Memory: A Reconsideration of Mental Storage Capacity},
  author    = {Cowan, Nelson},
  journal   = {Behavioral and Brain Sciences},
  volume    = {24},
  number    = {1},
  pages     = {87--114},
  year      = {2001},
  publisher = {Cambridge University Press}
}

@article{together2023redpajama,
  title   = {RedPajama: An Open Dataset for Training Large Language Models},
  author  = {Together Computer},
  journal = {arXiv preprint arXiv:2307.09288},
  year    = {2023}
}

@inproceedings{rasley2020deepspeed,
  title={Deepspeed: System optimizations enable training deep learning models with over 100 billion parameters},
  author={Rasley, Jeff and Rajbhandari, Samyam and Ruwase, Olatunji and He, Yuxiong},
  booktitle={Proceedings of the 26th ACM SIGKDD international conference on knowledge discovery \& data mining},
  pages={3505--3506},
  year={2020}
}

@misc{azerbayev2022proofpile,
  title  = {Proof-Pile},
  author = {Azerbayev, Zhangir and Ayers, Edward and Piotrowski, Bartosz},
  year   = {2022},
  url    = {https://github.com/zhangir-azerbayev/proof-pile}
}

@inproceedings{
li2023how,
title={How Long Can Context Length of Open-Source {LLM}s truly Promise?},
author={Dacheng Li and Rulin Shao and Anze Xie and Ying Sheng and Lianmin Zheng and Joseph Gonzalez and Ion Stoica and Xuezhe Ma and Hao Zhang},
booktitle={NeurIPS 2023 Workshop on Instruction Tuning and Instruction Following},
year={2023},
url={https://openreview.net/forum?id=LywifFNXV5}
}

@article{DBLP:journals/corr/abs-2404-06654,
  publtype={informal},
  author={Cheng-Ping Hsieh and Simeng Sun and Samuel Kriman and Shantanu Acharya and Dima Rekesh and Fei Jia and Yang Zhang and Boris Ginsburg},
  title={RULER: What's the Real Context Size of Your Long-Context Language Models?},
  year={2024},
  cdate={1704067200000},
  journal={CoRR},
  volume={abs/2404.06654},
  url={https://doi.org/10.48550/arXiv.2404.06654}
}

@inproceedings{
chen2024longlora,
title={LongLo{RA}: Efficient Fine-tuning of Long-Context Large Language Models},
author={Yukang Chen and Shengju Qian and Haotian Tang and Xin Lai and Zhijian Liu and Song Han and Jiaya Jia},
booktitle={The Twelfth International Conference on Learning Representations},
year={2024},
url={https://openreview.net/forum?id=6PmJoRfdaK}
}

@article{tworkowski2023focused,
  title={Focused transformer: Contrastive training for context scaling},
  author={Tworkowski, Szymon and Staniszewski, Konrad and Pacek, Miko{\l}aj and Wu, Yuhuai and Michalewski, Henryk and Mi{\l}o{\'s}, Piotr},
  journal={Advances in neural information processing systems},
  volume={36},
  pages={42661--42688},
  year={2023}
}

@inproceedings{tishby2000information,
  title={The information bottleneck method},
  author={Tishby, Naftali and Pereira, Fernando C and Bialek, William},
  booktitle={Proceedings of the 37th Annual Allerton Conference on Communication, Control, and Computing},
  pages={368--377},
  year={2000}
}

@article{du2022glm,
    title={GLM: General Language Model Pretraining with Autoregressive Blank Infilling},
    author={Du, Zhengxiao and Qian, Yujie and Liu, Xiao and Ding, Ming and Qiu, Jiezhong and Yang, Zhilin and Tang, Jie},
    journal={arXiv preprint arXiv:2103.10360},
    year={2022}
  }

@misc{MosaicML2023mpt,
    title={Introducing MPT-7B: A New Standard for Open-Source, Commercially Usable LLMs},
    author={MosaicML},
    year={2023},
    howpublished={\url{https://www.mosaicml.com/blog/mpt-7b}}
  }

@inproceedings{dai2019transformerxl,
  title        = {Transformer-XL: Attentive Language Models Beyond a Fixed-Length Context},
  author       = {Dai, Zihang and Yang, Zhilin and Yang, Yiming and Carbonell, Jaime and Le, Quoc V. and Salakhutdinov, Ruslan},
  booktitle    = {Proceedings of the 57th Annual Meeting of the Association for Computational Linguistics (ACL)},
  year         = {2019},
  pages        = {2978--2988},
  publisher    = {Association for Computational Linguistics},
  url          = {https://aclanthology.org/P19-1285.pdf}
}

@inproceedings{
Rae2020Compressive,
title={Compressive Transformers for Long-Range Sequence Modelling},
author={Jack W. Rae and Anna Potapenko and Siddhant M. Jayakumar and Chloe Hillier and Timothy P. Lillicrap},
booktitle={International Conference on Learning Representations},
year={2020},
url={https://openreview.net/forum?id=SylKikSYDH}
}

@inproceedings{katharopoulos2020transformers,
  title        = {Transformers are RNNs: Fast Autoregressive Transformers with Linear Attention},
  author       = {Katharopoulos, Anthony and Vyas, Apoorv and Pappas, Nikolaos and Fleuret, Fran{\c{c}}ois},
  booktitle    = {International Conference on Machine Learning (ICML)},
  year         = {2020},
  pages        = {5156--5165},
  publisher    = {PMLR}
}

@inproceedings{arora2024zoology,
  title={Zoology: Measuring and improving recall in efficient language models},
  author={Arora, Simran and Eyuboglu, Sabri and Timalsina, Aman and Johnson, Isys and Poli, Michael and Zou, James Y and Rudra, Atri and R{\'e}, Christopher},
  booktitle={International conference on learning representations},
  volume={2024},
  pages={15664--15730},
  year={2024}
}

@inproceedings{zhu2024pose,
    title={Po{SE}: Efficient Context Window Extension of {LLM}s via Positional Skip-wise Training},
    author={Zhu, Dawei and Yang, Nan and Wang, Liang and Song, Yifan and Wu, Wenhao and Wei, Furu and Li, Sujian},
    booktitle={The Twelfth International Conference on Learning Representations},
    year={2024},
    url={https://openreview.net/forum?id=3Z1gxuAQrA}
  }

@inproceedings{
ding2024longrope,
title={LongRo{PE}: Extending {LLM} Context Window Beyond 2 Million Tokens},
author={Yiran Ding and Li Lyna Zhang and Chengruidong Zhang and Yuanyuan Xu and Ning Shang and Jiahang Xu and Fan Yang and Mao Yang},
booktitle={Forty-first International Conference on Machine Learning},
year={2024},
url={https://openreview.net/forum?id=ONOtpXLqqw}
}

@inproceedings{bai2024longalign,
    title={{L}ong{A}lign: A Recipe for Long Context Alignment of Large Language Models},
    author={Bai, Yushi and Lv, Xin and Zhang, Jiajie and He, Yuze and Qi, Ji and Hou, Lei and Tang, Jie and Dong, Yuxiao and Li, Juanzi},
    booktitle={Findings of the Association for Computational Linguistics: EMNLP 2024},
    pages={1376--1395},
    year={2024},
    publisher={Association for Computational Linguistics}
  }

@inproceedings{peng2024yarn,
    title={Ya{RN}: Efficient Context Window Extension of Large Language Models},
    author={Peng, Bowen and Quesnelle, Jeffrey and Fan, Honglu and Shippole, Enrico},
    booktitle={The Twelfth International Conference on Learning Representations},
    year={2024},
    url={https://openreview.net/forum?id=wHBfxhZu1u}
  }

@book{kintsch1998comprehension,
    title={Comprehension: A Paradigm for Cognition},
    author={Kintsch, Walter},
    year={1998},
    publisher={Cambridge University Press}
  }

@article{dehaene2001towards,
    title={Towards a Cognitive Neuroscience of Consciousness: Basic Evidence and a Workspace Framework},
    author={Dehaene, Stanislas and Naccache, Lionel},
    journal={Cognition},
    volume={79},
    number={1-2},
    pages={1--37},
    year={2001},
    publisher={Elsevier}
  }

@article{felleman1991distributed,
    title={Distributed Hierarchical Processing in the Primate Cerebral Cortex},
    author={Felleman, Daniel J and Van Essen, David C},
    journal={Cerebral Cortex},
    volume={1},
    number={1},
    pages={1--47},
    year={1991},
    publisher={Oxford University Press}
  }

@inproceedings{hutchins2022block, title={Block-Recurrent Transformers}, author={Hutchins, DeLesley and Schlag,   
  Imanol and Wu, Yuhuai and Dyer, Ethan and Neyshabur, Behnam}, booktitle={Advances in Neural Information          
  Processing Systems}, year={2022}}

@inproceedings{ho2024block, title={Block Transformer: Global-to-Local Language Modeling for Fast Inference},     
  author={Ho, Namgyu and Bae, Sangmin and Kim, Taehyeon and Jo, Hyunjik and Kim, Yireun and Schuster, Tal and      
  Fisch, Adam and Thorne, James and Yun, Se-Young}, booktitle={Advances in Neural Information Processing Systems}, 
  year={2024}}

@inproceedings{fountas2025emllm, title={Human-inspired Episodic Memory for Infinite Context {LLM}s},             
  author={Fountas, Zafeirios and Benfeghoul, Martin A and Oomerjee, Adnan and Christopoulou, Fenia and Lampouras,  
  Gerasimos and Bou-Ammar, Haitham and Wang, Jun}, booktitle={International Conference on Learning                 
Representations}, year={2025}}

@inproceedings{bai2024longbench,
  title={Longbench: A bilingual, multitask benchmark for long context understanding},
  author={Bai, Yushi and Lv, Xin and Zhang, Jiajie and Lyu, Hongchang and Tang, Jiankai and Huang, Zhidian and Du, Zhengxiao and Liu, Xiao and Zeng, Aohan and Hou, Lei and others},
  booktitle={Proceedings of the 62nd annual meeting of the association for computational linguistics (volume 1: Long papers)},
  pages={3119--3137},
  year={2024}
}

@inproceedings{hu2022lora, 
  title = "{LoRA}: Low-Rank Adaptation of Large Language Models",  
  author = "Hu, Edward J. and Shen, Yelong and Wallis, Phillip and Allen-Zhu, Zeyuan and Li, Yuanzhi and
Wang, Shean and Wang, Lu and Chen, Weizhu", 
  booktitle = "International Conference on Learning Representations (ICLR)", 
  year = "2022",   
  url = "https://openreview.net/forum?id=nZeVKeeFYf9",  
}

@inproceedings{rajbhandari2020zero,
  title={Zero: Memory optimizations toward training trillion parameter models},
  author={Rajbhandari, Samyam and Rasley, Jeff and Ruwase, Olatunji and He, Yuxiong},
  booktitle={SC20: International Conference for High Performance Computing, Networking, Storage and Analysis},
  pages={1--16},
  year={2020},
  organization={IEEE}
}

@article{achiam2023gpt,
  title={Gpt-4 technical report},
  author={Achiam, Josh and Adler, Steven and Agarwal, Sandhini and Ahmad, Lama and Akkaya, Ilge and Aleman, Florencia Leoni and Almeida, Diogo and Altenschmidt, Janko and Altman, Sam and Anadkat, Shyamal and others},
  journal={arXiv preprint arXiv:2303.08774},
  year={2023}
}

@article{DBLP:journals/corr/abs-2108-12409,
  publtype={informal},
  author={Ofir Press and Noah A. Smith and Mike Lewis},
  title={Train Short, Test Long: Attention with Linear Biases Enables Input Length Extrapolation},
  year={2021},
  cdate={1609459200000},
  journal={CoRR},
  volume={abs/2108.12409},
  url={https://arxiv.org/abs/2108.12409}
}

@article{DBLP:journals/corr/abs-2307-02486,
  publtype={informal},
  author={Jiayu Ding and Shuming Ma and Li Dong and Xingxing Zhang and Shaohan Huang and Wenhui Wang and Nanning Zheng and Furu Wei},
  title={LongNet: Scaling Transformers to 1, 000, 000, 000 Tokens},
  year={2023},
  cdate={1672531200000},
  journal={CoRR},
  volume={abs/2307.02486},
  url={https://doi.org/10.48550/arXiv.2307.02486},
}

@book{baars1993cognitive,
  title={A cognitive theory of consciousness},
  author={Baars, Bernard J},
  year={1993},
  publisher={Cambridge University Press}
}

@article{vanrullen2021deep,
  title={Deep learning and the global workspace theory},
  author={VanRullen, Rufin and Kanai, Ryota},
  journal={Trends in Neurosciences},
  volume={44},
  number={9},
  pages={692--704},
  year={2021},
  publisher={Elsevier}
}

@inproceedings{DBLP_conf_iclr_GoyalDLBKRBBMB22,
  author={Anirudh Goyal and Aniket Rajiv Didolkar and Alex Lamb and Kartikeya Badola and Nan Rosemary Ke and Nasim Rahaman and Jonathan Binas and Charles Blundell and Michael Curtis Mozer and Yoshua Bengio},
  title={Coordination Among Neural Modules Through a Shared Global Workspace},
  year={2022},
  cdate={1640995200000},
  url={https://openreview.net/forum?id=XzTtHjgPDsT},
  booktitle={ICLR},
}

@inproceedings{hong2024concept,
  title={Concept-centric transformers: Enhancing model interpretability through object-centric concept learning within a shared global workspace},
  author={Hong, Jinyung and Park, Keun Hee and Pavlic, Theodore P},
  booktitle={Proceedings of the IEEE/CVF Winter Conference on Applications of Computer Vision},
  pages={4880--4891},
  year={2024}
}

@article{barrault2024large,
  title={Large concept models: Language modeling in a sentence representation space},
  author={Barrault, Lo{\"\i}c and Duquenne, Paul-Ambroise and Elbayad, Maha and Kozhevnikov, Artyom and Alastruey, Belen and Andrews, Pierre and Coria, Mariano and Couairon, Guillaume and Costa-juss{\`a}, Marta R and Dale, David and others},
  journal={arXiv preprint arXiv:2412.08821},
  year={2024}
}

@inproceedings{
zeng2024a,
title={A Framework for Inference Inspired by Human Memory Mechanisms},
author={Xiangyu Zeng and Jie Lin and Piao Hu and Ruizheng Huang and Zhicheng Zhang},
booktitle={The Twelfth International Conference on Learning Representations},
year={2024},
url={https://openreview.net/forum?id=vBo7544jZx}
}

@inproceedings{
hao2025training,
title={Training Large Language Models to Reason in a Continuous Latent Space},
author={Shibo Hao and Sainbayar Sukhbaatar and DiJia Su and Xian Li and Zhiting Hu and Jason E Weston and Yuandong Tian},
booktitle={Workshop on Reasoning and Planning for Large Language Models},
year={2025},
url={https://openreview.net/forum?id=KrWSrrYGpT}
}

@inproceedings{li2025identify,
  title={Identify, Isolate, and Purge: Mitigating Hallucinations in LVLMs via Self-Evolving Distillation},
  author={Li, Wenhao and Su, Xiu and Wu, Jingyi and Yang, Feng and Liu, Yang and Chen, Yi and You, Shan and Xu, Chang},
  booktitle={Proceedings of the 33rd ACM International Conference on Multimedia},
  pages={6791--6800},
  year={2025}
}

@inproceedings{xu2026rethinking,
  title={Rethinking visual token reduction in lvlms under cross-modal misalignment},
  author={Xu, Rui and Wang, Yunke and Luo, Yong and Du, Bo},
  booktitle={Proceedings of the AAAI Conference on Artificial Intelligence},
  volume={40},
  number={32},
  pages={27323--27331},
  year={2026}
}

@inproceedings{
wang2025position,
title={Position: {AI} Scaling: From Up to Down and Out},
author={Yunke Wang and Yanxi Li and Chang Xu},
booktitle={Forty-second International Conference on Machine Learning Position Paper Track},
year={2025},
url={https://openreview.net/forum?id=UTxi86wmas}
}

@article{xu2026vla,
  title={Vla-cache: Efficient vision-language-action manipulation via adaptive token caching},
  author={Xu, Siyu and Wang, Yunke and Xia, Chenghao and Zhu, Dihao and Huang, Tao and Xu, Chang},
  journal={Advances in Neural Information Processing Systems},
  volume={38},
  pages={164448--164473},
  year={2026}
}

@article{hurst2024gpt,
  title={Gpt-4o system card},
  author={Hurst, Aaron and Lerer, Adam and Goucher, Adam P and Perelman, Adam and Ramesh, Aditya and Clark, Aidan and Ostrow, AJ and Welihinda, Akila and Hayes, Alan and Radford, Alec and others},
  journal={arXiv preprint arXiv:2410.21276},
  year={2024}
}

@article{anthropic2024claude,
  title={The claude 3 model family: Opus, sonnet, haiku},
  author={Anthropic, AI},
  journal={Claude-3 Model Card},
  volume={1},
  number={1},
  pages={4},
  year={2024}
}

@article{grattafiori2024llama,
  title={The llama 3 herd of models},
  author={Grattafiori, Aaron and Dubey, Abhimanyu and Jauhri, Abhinav and Pandey, Abhinav and Kadian, Abhishek and Al-Dahle, Ahmad and Letman, Aiesha and Mathur, Akhil and Schelten, Alan and Vaughan, Alex and others},
  journal={arXiv preprint arXiv:2407.21783},
  year={2024}
}

@inproceedings{DBLP_conf_icml_An0ZGQZK24,
  author={Chenxin An and Fei Huang and Jun Zhang and Shansan Gong and Xipeng Qiu and Chang Zhou and Lingpeng Kong},
  title={Training-Free Long-Context Scaling of Large Language Models},
  year={2024},
  cdate={1704067200000},
  url={https://openreview.net/forum?id=If4xW9vF7U},
  booktitle={ICML},
}

@article{yang2025qwen3,
  title={Qwen3 technical report},
  author={Yang, An and Li, Anfeng and Yang, Baosong and Zhang, Beichen and Hui, Binyuan and Zheng, Bo and Yu, Bowen and Gao, Chang and Huang, Chengen and Lv, Chenxu and others},
  journal={arXiv preprint arXiv:2505.09388},
  year={2025}
}

@article{wang2026sibyl,
  title={Sibyl-AutoResearch: Autonomous Research Needs Self-Evolving Trial-and-Error Harnesses, Not Paper Generators},
  author={Wang, Chengcheng and Xie, Qinhua and He, Wei and Guo, Jianyuan and Wang, Shiqi and Xu, Chang},
  journal={arXiv preprint arXiv:2605.22343},
  year={2026}
}
\bibliographystyle{icml2026}

\newpage
\appendix
\onecolumn

\section{Theoretical Analysis}
  \label{app:theory}

  This appendix provides theoretical analysis of HiCI's architectural choices.
  Rather than establishing optimality, our goal is to characterize the
  information-theoretic and computational properties that underlie the
  empirical behaviors observed in experiments: the effectiveness of compact
  representations, the role of shared compression, and the trade-offs inherent
  in fixed-capacity hierarchical integration.

  \subsection{Notation}
  \label{app:notation}

  Let $X \in \mathbb{R}^{T \times d}$ denote an input sequence of $T$ tokens
  with hidden dimension $d$. HiCI partitions $X$ into $N = T/S$ non-overlapping
  segments $\{X_i\}_{i=1}^{N}$, each of length $S$.
  Table~\ref{tab:notation} summarizes the key architectural hyperparameters.
  All are fixed constants chosen before training and remain invariant across
  sequence lengths at inference time.

  \begin{table}[h]
  \centering
  \caption{Summary of notation.}
  \label{tab:notation}
  \small
  \begin{tabular}{cl}
  \toprule
  \textbf{Symbol} & \textbf{Description} \\
  \midrule
  $M$ & Local cardinality (queries per segment) \\
  $K$ & Global cardinality (context vectors) \\
  $d_s$ & Intermediate compression dimension \\
  $d_b$ & Bottleneck dimension for attention \\
  \bottomrule
  \end{tabular}
  \end{table}

  \subsection{Hierarchical Information Flow}
  \label{app:hierarchy}

  We formalize HiCI's hierarchical structure through functional decomposition
  and analyze the resulting information flow.

  \paragraph{Compositional Structure.}
  A HiCI block computes the output through three composed functions:
  \begin{equation}
  L = f_{\text{local}}(X), \qquad
  G = f_{\text{global}}(L), \qquad
  \tilde{X} = f_{\text{broadcast}}(X, L, G),
  \end{equation}
  where $L = \{L_i\}_{i=1}^{N}$ with $L_i \in \mathbb{R}^{M \times d}$ denotes
  the local representations extracted from each segment, and
  $G \in \mathbb{R}^{K \times d}$ denotes the global context aggregated
  from all segments. This decomposition directly mirrors the three computational
  stages described in \S\ref{sec:overview}.

  \paragraph{Cross-Segment Dependency.}
  Consider two tokens $x_s \in X_j$ and $x_t \in X_i$ residing in different
  segments ($j \neq i$). Under standard segmented attention, these tokens
  cannot interact since attention is restricted within each segment.
  HiCI overcomes this limitation by introducing a hierarchical pathway:
  \begin{equation}
  x_s \;\longrightarrow\; L_j \;\longrightarrow\; G \;\longrightarrow\; \tilde{x}_t.
  \end{equation}
  This three-hop path enables sequence-wide information flow while preserving
  the computational benefits of segment-parallel processing.

  \paragraph{Receptive Field.}
  In the broadcast stage, each token $x_t \in X_i$ attends over the augmented
  context $[G; L_i; X_i] \in \mathbb{R}^{(K+M+S) \times d}$. The attention
  output takes the form:
  \begin{equation}
  \tilde{x}_t = \sum_{j=1}^{K+M+S} \alpha_{tj} \, v_j,
  \end{equation}
  where $\{v_j\}$ are value projections and $\{\alpha_{tj}\}$ are
  softmax-normalized attention weights computed jointly over all $K+M+S$ positions.
  Since the global context $G$ aggregates information from all $N$ segments,
  each token gains indirect access to the entire sequence through the first
  $K$ positions of the augmented context.

\subsection{Causality of Global Context Construction}
\label{app:causality}
The Global Integration step in \S\ref{sec:global} aggregates all local
representations $\{L_i\}_{i=1}^{N}$ into a shared global
context $G \in \mathbb{R}^{K \times d}$, which is prepended to each segment's key--value sequence. Although token-to-token attention within each segment remains strictly causal, the shared prefix $G$ introduces a cross-segment non-causal pathway by aggregating representations derived from future segments. To isolate the contribution of this pathway, we replace
$G$ with a strictly causal counterpart
\[
G_i \;=\; \mathrm{Agg}(L_1, \dots, L_i),
\]
in which segment $i$ aggregates only from current and
preceding local representations. All other components of
HiCI remain unchanged.

\begin{table}[h]
\centering
\caption{Perplexity ($\downarrow$) on the PG-19 test set for LLaMA-2-7B at evaluation context lengths of ${2\text{K}, 4\text{K}, 8\text{K}}$. All models are fine-tuned for 1000 steps using HiCI with segment size $S{=}2048$, and evaluated under standard full attention, consistent with Table~\ref{tab:ppl}.}
\label{tab:causal-ppl}
\small
\begin{tabular}{lccc}
\toprule
 & 2K & 4K & 8K \\
\midrule
LongLoRA      & 7.70 & 7.35 & 7.14 \\
Shared $G$    & 7.27 & 7.01 & 6.93 \\
Causal $G_i$  & 7.28 & 7.00 & 6.94 \\
\bottomrule
\end{tabular}
\end{table}

Across all evaluation lengths, $G$ and $G_i$ yield nearly identical perplexity
($|\Delta\mathrm{PPL}| = 0.01$),
while both improve over LongLoRA by
$0.21$--$0.43$. These results suggest that HiCI's gains are not primarily driven by cross-segment future access,
but by the hierarchical Construction--Integration--Broadcast structure.

  \subsection{Cardinality Design}
  \label{app:cardinality}

  The cardinalities $M$ and $K$ govern the capacity of local and global
  representations, respectively. Here we discuss their design rationale.

  \paragraph{Cognitive Motivation.}
  Following theories of limited working memory capacity~\citep{miller1956magical,cowan2001magical},
  we constrain both $M$ and $K$ to small constants independent of sequence length $T$.
  This fixed-capacity bottleneck encourages the model to learn hierarchical
  abstractions rather than relying on token-level memorization.

  \paragraph{Local Cardinality ($M$).}
  The parameter $M$ determines the number of learnable queries used in local
  construction, and hence the dimensionality of each local representation
  $L_i \in \mathbb{R}^{M \times d}$. Empirically, we observe that larger $M$
  improves performance at the training context length but degrades generalization
  to shorter sequences. This behavior is consistent with overfitting to
  length-specific patterns when excess capacity is available.
  We set $M = 8$ to balance in-distribution accuracy and length robustness.

\paragraph{Global Cardinality ($K$).}    
The parameter $K$ determines the dimensionality of the global context        
$G \in \mathbb{R}^{K \times d}$. Unlike $M$, the global integration stage    
operates on a fixed-size input (five statistical summaries), rendering $K$   
inherently decoupled from sequence length. We set $K = 4$; the attention-based weighting learns to project the  
five statistical views into $K$ compact global slots.

  \subsection{Local Compression}
  \label{app:local-capacity}

  The local construction stage maps each segment $X_i \in \mathbb{R}^{S \times d}$
  to a compact representation $L_i \in \mathbb{R}^{M \times d}$ with $M \ll S$.
  Motivated by cognitive theories of limited working memory (\S\ref{sec:local}),
  we fix $M$ as a small constant and analyze the information-theoretic
  implications of this design.

  \paragraph{Capacity Bound.}
  The cross-attention mechanism projects keys and values into a $d_b$-dimensional
  subspace before aggregation. Under a standard linear-Gaussian approximation---treating
  the bottleneck projection as an information channel with effective signal variance
  $\sigma_X^2$ and noise variance $\sigma_\epsilon^2$---the mutual information
  between a segment and its local representation admits the capacity-style bound:
  \begin{equation}
  I(X_i; L_i) \;\lesssim\; M \cdot d_b \cdot \log\!\left(1 + \frac{\sigma_X^2}{\sigma_\epsilon^2}\right).
  \end{equation}
  This bound highlights that the representational budget scales with the product
  $M \cdot d_b$, not with segment length $S$. While not a tight guarantee for
  attention in general, it provides a useful characterization of how $(M, d_b)$
  jointly control the information throughput of the local interface.

  \paragraph{Inductive Bias.}
  The fixed bottleneck $(M, d_b)$ forces the model to compress each segment into
  a small set of salient factors, functioning as an inductive bias toward
  abstraction. Fine-grained token details must compete for a limited representational
  budget, favoring task-relevant structure. The capacity--generalization trade-off
  discussed in \S\ref{app:cardinality} follows directly from this constraint.

\subsection{Statistical Aggregation}   
\label{app:stats}      
The global integration stage aggregates all local representations  
$\mathcal{L} \in \mathbb{R}^{(NM) \times d}$ into a fixed-size summary 
$\mathbf{Z} \in \mathbb{R}^{5 \times d}$ through five complementary statistics.                                 
Table~\ref{tab:stats} describes the information captured by each statistic.      
\begin{table}[h]                      
\centering                            
\caption{Statistical summaries computed in global integration.}   
\label{tab:stats}                     
\small                                 
\begin{tabular}{ll}                   
\toprule                               
\textbf{Statistic} & \textbf{Captured Information} \\    
\midrule                              
$\boldsymbol{\mu}$ (mean) & Central tendency \\  
$\boldsymbol{\sigma}$ (std) & Dispersion \\  
$\boldsymbol{\mu}^{+}, \boldsymbol{\mu}^{-}$ (max, min) & Extremal activations \\                               
$\hat{\boldsymbol{\mu}}$ (normalized mean) & Directional structure \\  
\bottomrule                           
\end{tabular}                        
\end{table}    
Together, these statistics provide a coarse characterization of the local                                       
representation distribution without retaining individual identities.   
\paragraph{Fixed-Size Interface.}      
A key property of this design is that the intermediate summary $\mathbf{Z} \in \mathbb{R}^{5 \times d}$         
remains constant regardless of sequence length $T$ or the number of segments $N$.                               
The subsequent attention-based weighting (\S\ref{sec:global}) then projects $\mathbf{Z}$                        
into the final global context $G \in \mathbb{R}^{K \times d}$ with $K = 4$ slots.                               
This two-stage process decouples global context capacity from sequence length,                                  
enabling the same architecture to operate across varying context sizes 
(see \S\ref{app:cardinality} for ablations on $K$).   

  \subsection{Two-Stage Compression}
  \label{app:compression}

  The shared compression $\phi\colon \mathbb{R}^d \to \mathbb{R}^{d_b}$
  proceeds through an intermediate bottleneck dimension:
  \begin{equation}
  \phi = \psi_b \circ \psi_c\colon \quad
  \mathbb{R}^d \xrightarrow{\;\psi_c\;} \mathbb{R}^{d_s} \xrightarrow{\;\psi_b\;} \mathbb{R}^{d_b},
  \end{equation}
  with $d_s < d_b \ll d$ ($d_s = 128$, $d_b = 512$, $d = 4096$ in our experiments).

  \paragraph{Regularization via Bottleneck.}
  The intermediate dimension $d_s$ imposes a capacity constraint before expansion
  to $d_b$. By the data processing inequality, information in the final
  representation is bounded by what passes through the narrower bottleneck:
  \begin{equation}
  I(\mathbf{Z}; \phi(\mathbf{Z})) \;\leq\; I(\mathbf{Z}; \psi_c(\mathbf{Z})).
  \end{equation}
  This two-stage design forces the model to first identify a compact, task-relevant
  subspace before expanding to the attention dimension.

  \paragraph{View Invariance.}
  Applying identical compression parameters to all five statistics enforces
  \emph{view-invariant} encoding: the model must learn a common projection that
  preserves relevant information across heterogeneous statistical views.
  This acts as structural regularization, encouraging consistent representations
  rather than view-specific overfitting. Our ablations (\S\ref{sec:ablation})
  confirm that using separate projections per view yields marginal or no improvement,
  validating the shared bottleneck design.

  \subsection{Computational Complexity}
  \label{app:complexity}

  We analyze the computational complexity of HiCI and establish its linear
  scaling with respect to sequence length.
  
  \begin{theorem}[Linear Complexity]
  \label{thm:complexity}
  HiCI achieves time complexity $O(TSd)$ and space complexity $O(S^2)$ per layer,
  linear in $T$ for fixed $S$. An additional $O((K{+}M)d)$ space is required
  for storing the hierarchical context, which is negligible for typical
  configurations ($K{+}M = 12$, $S \geq 1024$).
  \end{theorem}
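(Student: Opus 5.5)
We bound the cost of each of the three stages of \S\ref{sec:method} separately and then add them up. Throughout, the hyperparameters $M, K, d_b, d_s, H$ are constants independent of $T$, as stated in \S\ref{sec:overview} and Table~\ref{tab:notation}. We also use $N = T/S$. Time is counted in scalar multiply--adds. For space, we follow the FlashAttention-style accounting implicit in the statement: ``space'' means the transient attention-score memory, excluding the $O(Td)$ activations that any layer must store anyway. We will make this convention explicit at the start, because $O(S^2)$ cannot otherwise cover the input itself.

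\textbf{Local construction.} For each segment, computing $X_iW_K^\ell$ and $X_iW_V^\ell$ costs $O(Sdd_b)$. Computing the slot queries $L_{\text{slot}}W_Q^\ell$ costs $O(Mdd_b)$, and this is shared across segments. The score matrix is $M\times S$ in dimension $d_b$, giving $O(MSd_b)$. The output projection costs $O(Md_bd)$. Summing over $N$ segments gives $O(Td\,d_b + NMd_bd) = O(Td)$ for constant $d_b$, which is at most $O(TSd)$. The score memory per segment is $O(MS)$, which is at most $O(S^2)$.

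\textbf{Global integration.} The five statistics in \eqref{eq:stat-mean}--\eqref{eq:stat-dir} each make one or two passes over the $NM\times d$ tensor $\mathcal L$. This costs $O(NMd)=O(Td/S)$ time and $O(d)$ extra space. The shared compression \eqref{eq:shared-compress} acts on only $5$ rows, costing $O(5(dd_s+d_sd_b))=O(d)$. The cross-attention \eqref{eq:global-attn} is $K\times 5$ and costs $O(d_b^2)$. The expansion \eqref{eq:expand} costs $O(Kd_bd)$. So this stage is independent of $T$ apart from the $O(NMd)$ pooling.

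\textbf{Top-down broadcast.} This stage dominates, and it is where the argument needs the most care. Projecting queries, keys and values over the full sequence costs $O(Td^2)$. For each segment, $Q_i\tilde K_i^\top$ has shape $S\times(K{+}M{+}S)$ across $H$ heads of width $d/H$, so computing it costs $O(S(S{+}K{+}M)d)$. Multiplying by $\tilde V_i$ costs the same. Summing over segments gives $N\cdot O(S(S+K+M)d)=O(T(S+K+M)d)=O(TSd)$, using $K{+}M\le S$. The main obstacle is the $O(Td^2)$ projection term. The statement's $O(TSd)$ absorbs it only if $d\lesssim S$, or if we treat the projections as the backbone's unchanged linear layers, which standard attention also pays. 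We will state this explicitly, either by assuming $d=O(S)$ or by reading the bound as the attention-specific cost. In either reading, the total stays linear in $T$. For space, processing segments sequentially (or per parallel worker) requires only one $S\times(S+K+M)$ score block at a time, which is $O(S^2)$. Storing $G$ and the current $L_i$ adds $O((K+M)d)$. If all $L_i$ are retained for pooling, this becomes $O(NMd)$ instead, but $\mu$, $\sigma$ (via Welford's algorithm), max and min can all be accumulated in a streaming fashion, so the additional footprint stays $O((K{+}M)d)$. This matches the statement, and the ratio $(K+M)d/S^2$ is negligible for $K{+}M=12$ and $S\ge1024$.

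Adding the three stages gives $O(TSd)$ time and $O(S^2)+O((K{+}M)d)$ space per layer. Both are linear in $T$ for fixed $S$, which completes the plan.
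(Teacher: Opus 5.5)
Your proposal is correct and follows the paper's proof essentially step for step. Both use the same three-stage accounting: local construction $O(Tdd_b)$, pooling $O(Td/S)$, and broadcast $O(S^2d+Sd^2)$ per segment. The paper disposes of the $O(Td^2)$ projection term by asserting that the $S^2$ term dominates for typical $d$, which is exactly the $d=O(S)$ caveat you make explicit. Your version is the more honest one here, since $d=4096$ exceeds $S\in\{1024,2048\}$ in several of the paper's own configurations.

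Your space argument goes beyond the paper, which never justifies the $O(S^2)$ or $O((K{+}M)d)$ claims. The one loose end is in the streaming step: if you accumulate the statistics and discard each $L_i$, you must recompute $L_i$ in a second pass before broadcast. That pass costs another $O(Tdd_b)$ time, so the bound is unaffected. Alternatively, note that retaining all $L_i$ costs $NMd\le Td$, which your stated convention already excludes.
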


  \begin{proof}
  Let $N = T/S$ denote the number of segments. We analyze each stage separately.

  \paragraph{Local Construction.}
  For each segment, cross-attention between $M$ learnable queries and $S$ segment
  tokens incurs:
  \begin{equation}
  \underbrace{(M + S) \cdot d \cdot d_b}_{\text{projections}}
  \;+\;
  \underbrace{M \cdot S \cdot d_b}_{\text{attention}}
  \;+\;
  \underbrace{M \cdot d_b \cdot d}_{\text{output}}
  \;=\; O(S \cdot d \cdot d_b),
  \end{equation}
  where the dominant cost arises from key-value projections over $S$ tokens.
  Aggregating over $N$ segments yields a total cost of $O(T \cdot d \cdot d_b)$.

  \paragraph{Global Integration.}
  Computing statistical summaries over all $NM$ local vectors requires $O(NMd) = O(Td/S)$.
  The subsequent two-stage compression and global attention operate on fixed-size
  inputs (5 statistics and $K$ queries), contributing $O(1)$ with respect to $T$.

  \paragraph{Top-down Broadcast.}
  Each segment attends over an augmented context of size $(K + M + S)$:
  \begin{equation}
  \underbrace{(K + M + S) \cdot d^2}_{\text{projections}}
  \;+\;
  \underbrace{S \cdot (K + M + S) \cdot d}_{\text{attention}}
  \;=\; O(S^2 \cdot d + S \cdot d^2),
  \end{equation}
  where the quadratic dependence on $S$ dominates for typical hidden dimensions.
  Summing over $N$ segments gives a total cost of $O(T \cdot S \cdot d)$.

  \paragraph{Overall Complexity.}
  Combining all stages:
  \begin{equation}
  O(T \cdot d \cdot d_b) + O(T \cdot d / S) + O(T \cdot S \cdot d) = O(T \cdot S \cdot d),
  \end{equation}
  where the broadcast stage is asymptotically dominant.
  For fixed $S$, the overall time complexity is linear in $T$.
  \end{proof}

  Table~\ref{tab:complexity} compares HiCI's complexity with related methods.
  HiCI retains the $O(TSd)$ time complexity of windowed attention while
  introducing explicit hierarchical cross-segment interactions.

  \begin{table}[h]
  \centering
  \caption{Computational complexity comparison. $T$: sequence length,
  $d$: hidden dimension, $S$: segment/window size.}
  \label{tab:complexity}
  \small
  \begin{tabular}{lccc}
  \toprule
  \textbf{Method} & \textbf{Time} & \textbf{Space} & \textbf{Cross-Segment} \\
  \midrule
  Standard Attention & $O(T^2 d)$ & $O(T^2)$ & Full \\
  Linear Attention & $O(Td^2)$ & $O(Td)$ & Approximated \\
  \midrule
  Segmented Attention & $O(TSd)$ & $O(S^2)$ & None \\
  LongLoRA & $O(TSd)$ & $O(S^2)$ & Shifted windows \\
  \textbf{HiCI} & $O(TSd)$ & $O(S^2)$ & Hierarchical \\
  \bottomrule
  \end{tabular}
  \end{table}

  \begin{remark}
  While HiCI and segmented attention share the same asymptotic complexity,
  HiCI incurs a small constant overhead from the $(K + M)$ additional context
  tokens per segment. With $K = 4$ and $M = 8$, this overhead is negligible
  for typical segment sizes ($S \geq 1024$), adding less than 2\% to the
  attention computation while enabling cross-segment information flow.
  \end{remark}

\section{Training Details} 
\label{app:training}  

\subsection{Hyperparameters}
\label{app:hyperparameters}  

Table~\ref{tab:hyperparameters} summarises the hyperparameters used for HiCI training.
Unless otherwise specified, the same configuration is adopted for context lengths
from 8K to 64K for the 7B model and from 8K to 32K for the 13B model.
For the maximum-length settings (100K for 7B and 64K for 13B), we employ
DeepSpeed ZeRO Stage-3 and increase the number of segments to $N{=}10$ and $N{=}8$,
respectively, to satisfy memory constraints.
Supervised fine-tuning on LongAlpaca-12k is performed for 5 epochs; all remaining
hyperparameters are held constant.

\begin{table}[t]
\centering
\caption{Hyperparameters for HiCI training. PT denotes continued pretraining on RedPajama, and SFT denotes supervised fine-tuning on LongAlpaca-12k.} 
\label{tab:hyperparameters}
\small
\setlength{\tabcolsep}{3.5pt}
\renewcommand{\arraystretch}{1}

\begin{tabular}{lcccc} 
\toprule & \multicolumn{3}{c}{\textbf{LLaMA-2}} & \textbf{Qwen3} \\                  
\cmidrule(lr){2-4} \cmidrule(lr){5-5} 
\multirow{-2}{*}{\textbf{Hyperparameter}} & \textbf{7B (PT)} & \textbf{13B (PT)} & \textbf{7B (SFT)} & \textbf{8B (PT)} \\                            

\midrule
\multicolumn{5}{l}{\textbf{Optimization}} \\
Optimizer & AdamW & AdamW & AdamW & AdamW \\
Backbone learning rate & $2 \times 10^{-5}$ & $2 \times 10^{-5}$ & $2 \times 10^{-5}$ & $2 \times 10^{-5}$ \\
HiCI learning rate & $2 \times 10^{-4}$ & $2 \times 10^{-4}$ & $2 \times 10^{-4}$ & $2 \times 10^{-4}$ \\
Weight decay & 0 & 0 & 0 & 0 \\
LR scheduler & Constant w/ warmup & Constant w/ warmup & Constant w/ warmup & Constant w/ warmup \\
Warmup steps & 20 & 20 & 20 & 20 \\
Training duration & 1{,}000 steps & 1{,}000 steps  & 5 epochs & 1{,}000 steps \\

\midrule
\addlinespace[3pt]
\multicolumn{5}{l}{\textbf{Batch Configuration}} \\
Per-device batch size & 1 & 1 & 1 & 1 \\
Gradient accumulation & 8 & 8 & 8 & 8 \\
Number of GPUs & 8 & 8 & 8 & 8 \\
Effective batch size & 64 & 64 & 64 & 64 \\

\midrule
\addlinespace[3pt]
\multicolumn{5}{l}{\textbf{LoRA}} \\
LoRA rank $r$ & 8 & 8 & 8 & 8 \\
LoRA alpha $\alpha$ & 16 & 16 & 16 & 16 \\
LoRA dropout & 0.05 & 0.05 & 0.05 & 0.05 \\

\midrule
\addlinespace[3pt]
\multicolumn{5}{l}{\textbf{HiCI Architecture}} \\
Number of segments $N$ & 4 & 4 & 4 & 4 \\
Local slots $M$ & 8 & 8 & 8 & 8 \\
Global slots $K$ & 4 & 4 & 4 & 4 \\
Attention heads & 8 & 10 & 8 & 8 \\
Bottleneck dimension $d_b$ & 512 & 640 & 512 & 512 \\
Compression dimension $d_s$ & 128 & 160 & 128 & 128 \\
Gradient clip (HiCI) & 0.3 & 0.3 & 0.3 & 0.3 \\

\midrule
\addlinespace[3pt]
\multicolumn{5}{l}{\textbf{Infrastructure}} \\
Precision & BF16 & BF16 & BF16 & BF16 \\
DeepSpeed & ZeRO Stage-2 & ZeRO Stage-2 & ZeRO Stage-2 & ZeRO Stage-2 \\
Attention kernel & Flash-Attention 2 & Flash-Attention 2 & Flash-Attention 2 & Flash-Attention 2 \\

\bottomrule
\end{tabular}
\end{table}

\subsection{Training Efficiency} 
\label{app:training_efficiency}
Figure~\ref{fig:training_efficiency} compares peak GPU memory usage and
wall-clock training time for HiCI and LongLoRA across context lengths
from 8K to 100K tokens (LLaMA-2-7B, 8$\times$H100-80GB, 1{,}000 steps;
DeepSpeed ZeRO Stage-2 for 8K--64K and Stage-3 for 100K).
\emph{In terms of memory}, HiCI introduces a modest overhead of
3.5--9.9\% relative to LongLoRA, arising from the learnable local and
global representations in the hierarchical pipeline. Since these
representations have fixed capacity per segment, the relative memory
gap narrows as context length increases and remains manageable even at
100K under ZeRO Stage-3.
\emph{In terms of wall-clock time}, while HiCI incurs at most 7.5\%
additional overhead at short contexts (8K--32K), it becomes
progressively faster at long contexts. At 100K tokens, HiCI adopts a
finer partitioning with $N{=}10$ segments of 10K tokens each, whereas
LongLoRA operates with $N{=}4$ segments of 25K tokens. Because
per-segment attention scales quadratically with segment length, this
finer-grained grouping substantially reduces the dominant attention
compute, yielding a 19.3\% reduction in total training time
(36.4\,h vs.\ 45.1\,h) despite the additional representational overhead.

Table~\ref{tab:flops} provides a per-layer FLOPs breakdown for Full Attention,
S$^2$-Attention, and HiCI across context lengths from 8K to 100K on LLaMA-2-7B. HiCI reduces total FLOPs by 16--60\% relative to Full Attention (e.g.,
585.0 vs.\ 996.0 TFLOPs at 32K; 2{,}762.6 vs.\ 6{,}850.7 at 100K), primarily
due to segmented attention over partitioned sequences. In addition, the Local
Construction and Global Integration (LC+GI) stages introduce only a 1--2\%
computational overhead compared to S$^2$-Attention (e.g., 585.0 vs.\ 573.7 at
32K; 2{,}762.6 vs.\ 2{,}727.5 at 100K), reflecting the cost of cross-segment
representation construction and aggregation.

\begin{table}[h]
\centering
\caption{Per-layer FLOPs (TFLOPs) for LLaMA-2-7B across context lengths.
LC+GI denotes the combined overhead of the Local Construction and Global
Integration stages; dashes indicate stages absent in that method.}
\label{tab:flops}
\small
\setlength{\tabcolsep}{4pt}

\begin{tabular}{cccccccc}
\toprule
\textbf{Context} & \textbf{Method} & \textbf{Attn} & \textbf{Proj} & \textbf{FFN} & \textbf{Others} & \textbf{LC+GI} & \textbf{Total} \\
\midrule

\multirow{3}{*}{8K}
& Full Attn   & 35.2   & 35.2  & 70.9  & 2.1  & --   & 143.4 \\
& S$^2$-Attn  & 8.8    & 35.2  & 70.9  & 2.1  & --   & 117.0 \\
& HiCI        & 9.4    & 35.2  & 70.9  & 2.1  & 2.2  & 119.9 \\
\midrule

\multirow{3}{*}{16K}
& Full Attn   & 140.7  & 70.4  & 141.8 & 4.3  & --   & 357.2 \\
& S$^2$-Attn  & 35.2   & 70.4  & 141.8 & 4.3  & --   & 251.7 \\
& HiCI        & 36.4   & 70.4  & 141.8 & 4.3  & 4.4  & 257.3 \\
\midrule

\multirow{3}{*}{32K}
& Full Attn   & 562.9  & 140.7 & 283.7 & 8.6  & --   & 996.0 \\
& S$^2$-Attn  & 140.7  & 140.7 & 283.7 & 8.6  & --   & 573.7 \\
& HiCI        & 143.1  & 140.7 & 283.7 & 8.6  & 8.8  & 585.0 \\
\midrule

\multirow{3}{*}{64K}
& Full Attn   & 2251.8 & 281.5 & 567.3 & 17.2 & --   & 3117.8 \\
& S$^2$-Attn  & 562.9  & 281.5 & 567.3 & 17.2 & --   & 1429.0 \\
& HiCI        & 567.8  & 281.5 & 567.3 & 17.2 & 17.6 & 1451.4 \\
\midrule

\multirow{3}{*}{100K}
& Full Attn   & 5497.6 & 439.8 & 886.5 & 26.8 & --   & 6850.7 \\
& S$^2$-Attn  & 1374.4 & 439.8 & 886.5 & 26.8 & --   & 2727.5 \\
& HiCI        & 1381.9 & 439.8 & 886.5 & 26.8 & 27.6 & 2762.6 \\
\bottomrule
\end{tabular}
\end{table}
 
\begin{figure}[t]       
\centering          
\includegraphics[width=0.8\textwidth]{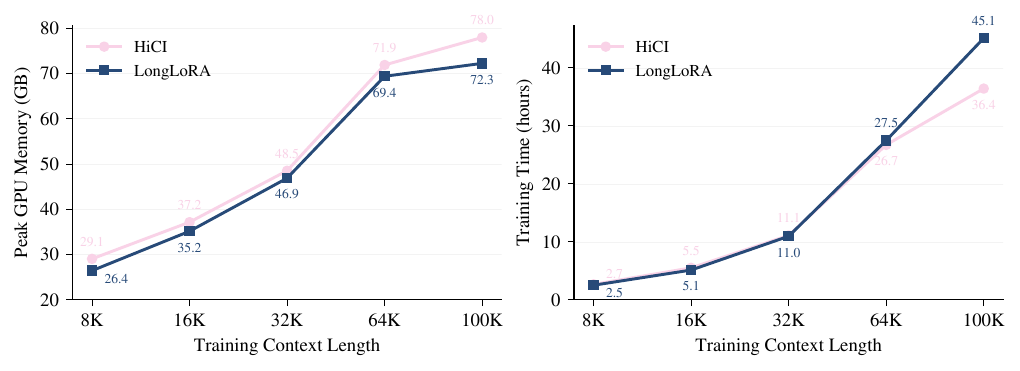} 
\caption{Peak GPU memory (left) and wall-clock training time (right) for                             
HiCI and LongLoRA (LLaMA-2-7B, 8$\times$H100-80GB, 1{,}000 steps;                                    
Stage-2 for 8K--64K, Stage-3 for 100K). The three-stage HiCI pipeline                                
raises memory by 3.5--9.9\%, which necessitates finer partitioning at                                
long contexts ($N{=}10$ at 100K vs.\ LongLoRA's $N{=}4$); the resulting                              
quadratic reduction in per-segment attention cost yields a 19.3\%                                    
wall-clock speedup.}                                                    
\label{fig:training_efficiency} 
\end{figure}   
   
\subsection{Training Loss Trajectories}
\label{app:training_loss}  
We compare the training dynamics of HiCI and LongLoRA during LLaMA-2-7B   
continual pre-training on RedPajama~\citep{together2023redpajama} over 
2{,}000 steps, varying context length (8K, 16K) and segment size     
($S \in \{1024, 2048\}$).
As shown in Figure~\ref{fig:training_loss},
HiCI with $S{=}1024$ exhibits sustained loss
reduction throughout training, with an additional decrease of 38\% over the
second half at 8K context length and 23\% at 16K. In contrast, HiCI with
$S{=}2048$ improves only marginally ($\sim$5\%), while all LongLoRA variants
plateau after approximately 1{,}000 steps ($\Delta{<}3\%$ thereafter).
The two methods display \emph{opposite preferences} with respect to segment
granularity. LongLoRA favors coarser segments (final loss 1.69 vs.\ 1.73 for
$S{=}2048$ vs.\ $1024$), consistent with prior findings that shifted sparse
attention benefits from wider per-head receptive fields~\citep{chen2024longlora}.
In contrast, HiCI improves substantially with finer segmentation (1.01 vs.\ 1.65),
suggesting that a larger number of segments yields richer local representations
for hierarchical aggregation.
Together, these results indicate that the two cross-segment mechanisms rely on
distinct inductive biases: direct attention over wider local windows versus
learnable compression and integration over more numerous segments.
\begin{figure}[t] 
\centering    
\includegraphics[width=\linewidth]{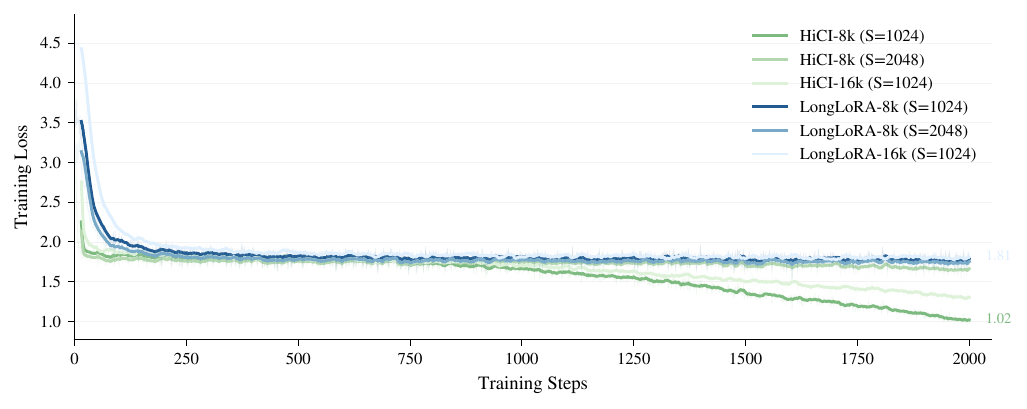} 
\caption{Training loss comparison between HiCI and LongLoRA on LLaMA-2-7B    
continual pre-training (RedPajama, 2{,}000 steps). Both methods are trained 
at 8K and 16K context with $S \in \{1024, 2048\}$. HiCI with $S{=}1024$   
sustains optimization throughout, while HiCI with $S{=}2048$ and all  
LongLoRA variants plateau beyond step 1{,}000.}                                
\label{fig:training_loss}  
\end{figure}   

\subsection{Parameter Overhead}
\label{app:params}

HiCI introduces additional learnable parameters that are independent of
sequence length. Table~\ref{tab:params-appendix} provides a detailed breakdown for
LLaMA-2-7B with 32 transformer layers.
\begin{table}[h]
\centering
\caption{Parameter overhead of HiCI on LLaMA-2-7B. We use $d{=}4096$, bottleneck dimension $d_b{=}512$,
shared compression dimension $d_s{=}128$, $M{=}8$ local slots, and $K{=}4$ global slots across 32 layers.}
\label{tab:params-appendix}
\small
\begin{tabular}{llrr}
\toprule
\textbf{Module} & \textbf{Component} & \textbf{Per Layer} & \textbf{Total (32L)} \\
\midrule
\multirow{3}{*}{\textit{Local Construction}}
  & Memory slots ($M{=}8$) & 32.8K & 1.0M \\
  & Cross-attention (Q/K/V/O) & 8.4M & 268.4M \\
\cmidrule{2-4}
  & \textbf{Subtotal} & \textbf{8.4M} & \textbf{269.5M} \\
\midrule
\multirow{5}{*}{\textit{Global Integration}}
  & Shared compression ($d_s{=}128$) & 591.1K & 18.9M \\
  & Global queries ($K{=}4$) & 2.0K & 0.1M \\
  & Lightweight attention (Q/K/V/O) & 1.0M & 33.6M \\
  & Expansion layer & 2.1M & 67.1M \\
\cmidrule{2-4}
  & \textbf{Subtotal} & \textbf{3.7M} & \textbf{119.6M} \\
\midrule
\multicolumn{2}{l}{\textbf{HiCI Total}} & \textbf{12.2M} & \textbf{389.1M} \\
\multicolumn{2}{l}{Base Model (LLaMA-2-7B)} & --- & 6.74B \\
\multicolumn{2}{l}{\textbf{Parameter Overhead}} & --- & \textbf{5.46\%} \\
\bottomrule
\end{tabular}
\end{table}

The parameter overhead is modest (5.46\%) relative to the base model and,
importantly, does not scale with sequence length---the same parameters
handle 4K, 32K, or 100K contexts without modification.

\section{Layer-wise Attention Analysis}
\label{app:attention_analysis}

We analyze how HiCI routes attention across hierarchical representations by recording layer-wise attention statistics during evaluation on PG-19. For each layer, we compute the fraction of total attention mass assigned to the $K{=}4$ global slots, averaged over all heads and evaluation samples. At each layer, the key--value sequence consists of $K$ global slots, $M{=}8$ local slots, and $S$ segment tokens, yielding a total length of $K{+}M{+}S$ (1036 for $S{=}1024$; 2060 for $S{=}2048$). Fig.~\ref{fig:layer_attention}(a) compares $S{=}1024$ and $S{=}2048$ under matched conditions (8K evaluation, 2K training steps), while Fig.~\ref{fig:layer_attention}(b) probes robustness at $S{=}2048$ by varying evaluation length and training duration.

\paragraph{Depth-dependent routing.}
Across configurations, attention to global slots exhibits a clear increasing trend with layer depth, despite minor layer-wise variations. Averaged over early layers (L0--7), global attention ranges from 1\% to 8\% across configurations; for deep layers (L24--31), it ranges from 6\% to 26\%, yielding deep-to-early ratios of 3.3--4.9$\times$. At the final layer (L31), global attention reaches 40.4\% for $S{=}1024$ ($\approx$105$\times$ the uniform baseline of 0.39\%) and 12.7\% for $S{=}2048$ ($\approx$65$\times$ the baseline of 0.19\%). As no explicit supervision is imposed on attention allocation, this stratification suggests that deeper layers allocate attention preferentially to hierarchical context.

\paragraph{Effect of segment granularity.}
Reducing $S$ from 2048 to 1024 amplifies global attention by approximately 4--7$\times$ across all depth groups, substantially exceeding the $2\times$ change in the proportional presence of global slots ($\tfrac{4}{1036}$ vs.\ $\tfrac{4}{2060}$). This behavior is consistent with the divergent scaling observed in Section~\ref{sec:ablation}: finer segmentation tightens the local information bottleneck and is associated with stronger reliance on global aggregation.

\paragraph{Robustness.}
At fixed $S{=}2048$, layer-wise allocation patterns remain largely stable when the evaluation length is halved (8K$\to$4K) or when training is shortened (2K$\to$1K steps), with per-layer deviations within 1 percentage point for most layers. This suggests that hierarchical routing emerges early in training and generalizes across sequence lengths.

\begin{figure}[t]
  \centering
  \includegraphics[width=0.6\textwidth]{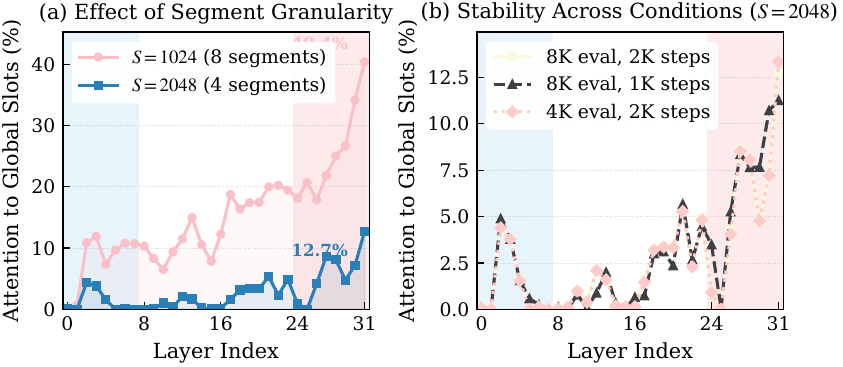}
  \caption{%
  Layer-wise attention allocated to global slots during evaluation on PG-19.
  Background shading denotes depth groups: early (L0--7, blue), middle (L8--23, white), and deep (L24--31, red).
  \textbf{(a)}~Comparison of segment sizes $S{=}1024$ and $S{=}2048$ under matched conditions (8K evaluation, 2K steps): finer segmentation yields substantially higher attention to global slots, with the final layer (L31) reaching 40.4\% versus 12.7\%.
  \textbf{(b)}~Robustness at $S{=}2048$: varying evaluation length (8K vs.\ 4K) and training steps (2K vs.\ 1K) yields nearly identical layer-wise allocation patterns, with per-layer deviations within 1 percentage point.
  In both panels, attention to global slots increases toward deeper layers.%
  }
  \label{fig:layer_attention}
\end{figure}

\end{document}